\documentclass[conference,letterpaper]{IEEEtran}

\usepackage{times}
\usepackage{amsmath,amssymb,amsthm,amsfonts}
\usepackage{algorithm,algorithmic}
\usepackage{graphicx}
\usepackage{hyperref}
\usepackage{enumitem}
\usepackage[numbers,sort&compress]{natbib}
\usepackage{booktabs}
\usepackage{caption}
\usepackage[caption=false,font=footnotesize]{subfig}
\usepackage{balance}

\newtheorem{theorem}{Theorem}
\newtheorem{proposition}[theorem]{Proposition}
\newtheorem{lemma}[theorem]{Lemma}

\newtheorem{definition}[theorem]{Definition}
\newtheorem{remark}[theorem]{Remark}
\newtheorem{assumption}[theorem]{Assumption}

\newcommand{\E}{\mathbb{E}}
\newcommand{\Prob}{\mathbb{P}}

\newcommand{\cH}{\mathcal{H}}
\newcommand{\cX}{\mathcal{X}}

\newcommand{\cS}{\mathcal{S}}

\title{Delay Complexity Dimension:\\Learning with Delayed and Censored Labels}
\author{\IEEEauthorblockN{Madhava Gaikwad}
\IEEEauthorblockA{Independent Researcher\\
gaikwad.madhav@gmail.com}}
\date{}

\begin{document}
\maketitle

\begin{abstract}
We study PAC learning of binary classifiers when labels arrive after
random delays and are censored beyond an observation window~$W$. How
many samples does this observation model require? We propose a
distribution-dependent answer, the \emph{Delay Complexity Dimension}
(DCD). The DCD combines the VC dimension with the exact
label-information rate of the observation channel. Every learner
needs $\Omega(\mathrm{DCD}/\varepsilon^2)$ samples. An oracle
pseudo-label ERM matches this rate on the instance family of the
lower bound, up to constants and logarithmic factors. On that family
the DCD is within a constant factor of the classical censored-ERM
complexity, and whether the DCD rate is achievable where the two
separate remains open. The theory also yields a quantization scheme
under which delay information lowers the residual label variance, an
expedition policy with a $(1-1/e)$ guarantee for its version-space
objective, and a phenomenon with no analogue in the no-delay setting.
Under a fixed collection horizon, a moderate window can strictly beat
immediate labels. Experiments on synthetic data and the Criteo
conversion logs support these predictions. A budget-matched test of a
companion labeler-diversity prediction is reported as a negative
result.
\end{abstract}

\section{Introduction}
\label{sec:intro}

In many classification problems the labels are \emph{delayed}. Ground
truth arrives some time after the feature vector is observed, and it
is censored if a fixed observation window~$W$ expires first.
Conversion-rate prediction~\citep{chapelle2014modeling}, fraud
detection, clinical event prediction, and any problem with
administrative censoring share this structure. The standard view
treats delay as a reduction in effective sample size. It scales the
complexity by the probability $q(W)=\Prob(D\le W)$ that the delay
$D$ does not exceed the window. This view is incomplete. The delay itself carries
information. A user who has not converted by day~7 is less likely to
convert at all. A patient whose biomarker has not crossed a threshold
by month~3 is less likely to do so eventually.

This intuition has been established in two parallel literatures:
\citet{yang2022generalized} formalized it for recommender systems
via conditional entropy, and \citet{masoudian2023delayed}
characterized when intermediate observations help in delayed
bandits. The corresponding question for batch supervised
classification has remained open. Our work is concerned with the
following question: \emph{how many samples does PAC learning
require when labels are delayed and censored?}

We address this question by introducing the Delay Complexity
Dimension and proving upper and lower bounds in terms of it. The
bounds match on the instance family of the lower-bound construction.
The rest of the paper develops this answer and delimits its scope.

\textbf{Contributions.}
\begin{enumerate}[itemsep=1pt,topsep=2pt,leftmargin=*]
\item \textbf{DCD:} a distribution-dependent dimension
(Definition~\ref{def:dcd}). It combines $\mathrm{VC}(\cH)$ with an
effective coverage $\eta(W)$, the per-sample label-information rate
of the observation channel.
\item \textbf{Near-matching PAC bounds.} Every learner needs
$n=\Omega\bigl((\mathrm{DCD}+\log(1/\delta))/\varepsilon^2\bigr)$
samples (Theorem~\ref{thm:dcd-lb}). An oracle pseudo-label ERM
attains this rate on every instance family of the lower-bound
construction (Theorem~\ref{thm:dcd-ub}). On that family the DCD is
within a constant factor of the censored-ERM complexity $d/q(W)$
(Remark~\ref{rem:balanced}), so the characterization is established
only where the two agree. The regime where they separate is open,
and we conjecture the match extends to it
(Remark~\ref{rem:tightness}).
\item \textbf{Quantization.} A constructive quantization lets
pseudo-label ERM use the delay value itself. Delay information lowers
the residual label variance at no cost in the oracle complexity term,
and the effective sample size is $n_{\mathrm{eff}}=c\,n\,q(W)$
(Lemma~\ref{lem:quantize}, Theorems~\ref{thm:info-tradeoff}
and~\ref{thm:neff}). The cost of finer bins is the calibration of
more bin posteriors (Remark~\ref{rem:plugin}).
\item \textbf{Active expedition.} Uniform-random expedition of $B$
labels raises the effective sample size to $n\,q(W)+B$
(Theorem~\ref{thm:expedition}). A threshold policy on the product of
version-space disagreement and censoring probability,
$\Delta(x)\cdot\Prob(D>W\mid x)$, achieves a $(1-1/e)$ approximation
for the version-space objective (Theorem~\ref{thm:policy}); its
excess-risk analysis is open (Remark~\ref{rem:scope}).
\item \textbf{Structural predictions.} Under a fixed collection
horizon, a moderate window $W^\star>0$ can strictly outperform
immediate labels (Theorem~\ref{thm:nonzero-delay-optimal}). A
variance bound for mixing heterogeneous-delay labelers holds under a
per-labeler sample normalization (Theorem~\ref{thm:diversity}); at a
matched total budget the predicted benefit does not appear, and
Experiment~5 reports this as a negative result.
\end{enumerate}

The construction parallels recent introductions of new combinatorial
dimensions for non-standard PAC observation
models~\citep{shaeiri2026pac,alon2022theory}. The main text is
self-contained. Every result is stated in full and comes with a proof
or a proof sketch that carries the key computation. Full proofs with
explicit constants appear in the full version of this paper. Code,
all figures, and per-run experimental records are available in a
public repository.\footnote{\url{https://github.com/krimler/delay_complexity_dimension/}}

\subsection{Related Work}
\label{sec:related}

\textbf{Delayed feedback and bandit learning with delays.}
The applied delayed-feedback literature originates with
\citet{chapelle2014modeling}. The closest precursor to our work is
\citet{yang2022generalized}, which uses conditional entropy as the
information measure in streaming recommendation.
\citet{joulani2013online} initiated the systematic study of online
learning with delays, showing that delay costs only an additive
regret penalty in stochastic problems. The line continues through
\citet{vernade2017stochastic,vernade2020non} for stochastic bandits.
Most directly, \citet{masoudian2023delayed} prove a regret rate of
$\sqrt{(K+\min\{|\cS|,d\})T}$ for delayed bandits with intermediate
observations, characterizing when those observations help. We provide
the PAC-batch analogue. The phrase ``delayed online-to-PAC'' appears
in \citet{abeles2025generalization} in a different technical sense,
as an analysis device converting online regret into generalization
bounds for mixing processes.

\textbf{Censored ERM, survival, and active learning under delay.}
For continuous $Y$ with right-censoring, \citet{ausset2022empirical}
prove finite-sample bounds of order $\sqrt{\log n/n}$ for ERM weighted
by the Kaplan--Meier estimator \citep{kaplan1958nonparametric}. Our bounds reduce to theirs when $I_W=0$.
\citet{goldberg2012q} provide analogous Q-learning results.
\citet{agarwal2013paraactive} give IWAL-style bounds for active
learning under delayed updates. They treat delay as additive latency.
Our active-expedition theorem uses the delay as an informative
signal. Related settings include temporal label
noise~\citep{nagaraj2025temporal}, decision-induced
censoring~\citep{yang2024generalization}, and the label latency
surveyed in multi-label stream
classification~\citep{multilabel2025stream}.

\textbf{New dimensions for non-standard PAC observation models.}
\citet{alon2022theory} extend PAC theory to partial concept classes
and characterize their learnability by the VC dimension of the
partial class. \citet{shaeiri2026pac} introduce the bandit
DS dimension via pseudo-boxes for multiclass PAC with bandit feedback,
with matching bounds. Our DCD follows this structural template for the
delayed-and-censored observation model. The foundation of VC
theory~\citep{vapnik1971uniform,blumer1989learnability,hanneke2016optimal}
is assumed.

\section{Setup and the Delay Complexity Dimension}
\label{sec:setup}

\textbf{Data and observation model.}
Let $(X,Y,D)\sim P$ be drawn i.i.d., with $X\in\cX$, $Y\in\{0,1\}$,
$D\in[0,\infty]$ the delay. Fix a window $W\ge 0$. The learner observes
$Z=(X,A,Y\cdot A)$ where $A=\mathbf{1}\{D\le W\}$. The label is seen
only when $A=1$. The risk of $h\in\cH$ is
$R(h)=\Prob(h(X)\ne Y)$. We measure performance by the excess risk
$R(h)-\inf_{h\in\cH}R(h)$. The realizable case $\inf_{h\in\cH}R(h)=0$
is the reference setting, and the lower-bound construction perturbs
labels by $\varepsilon$ around it. Given $n$ i.i.d.\ observations
$Z_1,\ldots,Z_n$, a learner is $(\varepsilon,\delta)$-PAC if, with
probability at least $1-\delta$, it returns $\hat h$ with excess risk
at most $\varepsilon$.

\textbf{Key quantities.}
We write $q(W)=\Prob(A=1)$ for the in-window probability,
$q(x)=\Prob(A{=}1\mid X{=}x)$ for its conditional version, and
$I_W=I(Y;A\mid X)=H(Y\mid X)-H(Y\mid X,A)$ for the (symmetric) delay
informativeness. Let $d=\mathrm{VC}(\cH)$. Entropies and mutual
informations are in nats throughout, and
$H_2(p)=-p\ln p-(1-p)\ln(1-p)$ denotes the binary entropy. The symbols
$c$, $C$, $C_1$, $C_2$ denote absolute constants that may change
between displays. The effective coverage is the exact per-sample
label-information rate of the observation channel,
\begin{equation}
\eta(W)\;:=\;\frac{I(Z;Y\mid X)}{H(Y\mid X)}\;\in\;[0,1].
\label{eq:eta}
\end{equation}
The ratio requires $H(Y\mid X)>0$, and we assume this throughout; the
lower-bound construction perturbs labels by $\varepsilon$ and
satisfies it for every $\varepsilon>0$.
The label is revealed exactly on $\{A=1\}$. The censored branch
leaves residual entropy $H(Y\mid X,A{=}0)$. Averaging the two
branches gives $H(Y\mid X,Z)=\E[(1-q(X))\,H(Y\mid X,A{=}0)]$, and we
obtain the identity
\begin{equation}
I(Z;Y\mid X)\;=\;H(Y\mid X)-\E\bigl[(1-q(X))\,H(Y\mid X,A{=}0)\bigr].
\label{eq:identity}
\end{equation}
Writing
$I^0_W:=\E_X[H(Y\mid X)-H(Y\mid X,A{=}0)]$ for the (signed) specific
information carried by the censoring event, when $q(x)\equiv q(W)$ this
yields the decomposition
\begin{equation}
\eta(W)\;=\;q(W)+\bigl(1-q(W)\bigr)\cdot\frac{I^0_W}{H(Y\mid X)}.
\label{eq:eta-decomp}
\end{equation}
The symmetric quantity mixes both branches,
$I_W=\E[q(X)I^1(X)]+\E[(1-q(X))I^0(X)]$ with
$I^a(x)=H(Y\mid x)-H(Y\mid x,A{=}a)$. Only the censored branch enters
\eqref{eq:eta-decomp}, because the labeled branch is already fully
credited through $q(W)$. The quantity $I^0_W$ can be negative under
skewed label priors, and then $\eta(W)<q(W)$: the in-window
subsample is class-biased and censored observations mislead the
learner.

\begin{definition}[Delay Complexity Dimension]
\label{def:dcd}
The Delay Complexity Dimension of $(\cH,P,W)$ is
$\mathrm{DCD}(\cH,P,W) := d/\eta(W)$.
\end{definition}

In the exactly realizable boundary case with noiseless labels,
$H(Y\mid X)=0$ and $\eta(W)$ is undefined; there we set
$\mathrm{DCD}:=d/q(W)$ by convention, matching the censored-ERM
complexity of case (ii) below.

\begin{proposition}[Limiting cases]
\label{prop:limits}
(i) If $q(W)=1$, then $\eta(W)=1$ and $\mathrm{DCD}=d$, recovering
classical VC. (ii) If $D\perp Y\mid X$, then $I^0_W=0$,
$\eta(W)=q(W)$, and $\mathrm{DCD}=d/q(W)$, recovering censored
ERM~\citep{ausset2022empirical} in the binary setting.
(iii) If $H(Y\mid X,A{=}0)=0$ wherever censoring has positive
probability (the censoring event determines the label), then
$\eta(W)=1$ and $\mathrm{DCD}=d$ regardless of $q(W)$. The delay is
then maximally informative.
\end{proposition}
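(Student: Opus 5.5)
All three cases follow from the identity \eqref{eq:identity}, $I(Z;Y\mid X)=H(Y\mid X)-\E[(1-q(X))H(Y\mid X,A{=}0)]$, together with the definition \eqref{eq:eta}. In each case I will compute the residual term $\E[(1-q(X))H(Y\mid X,A{=}0)]$, read off $\eta(W)$, and substitute into Definition~\ref{def:dcd}. Throughout I will read $H(Y\mid X,A{=}0)$ pointwise as the function $x\mapsto H(Y\mid X{=}x,A{=}0)$, taken to be $0$ where $q(x)=1$, since there it is multiplied by zero. The standing assumption $H(Y\mid X)>0$ makes every ratio well defined.

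\emph{Case (i).} If $q(W)=\Prob(A=1)=1$, then $q(x)=1$ for $P_X$-almost every $x$, because $q(W)=\E[q(X)]$ and $q(X)\le 1$. So $(1-q(X))=0$ almost surely and the residual term vanishes. Then $I(Z;Y\mid X)=H(Y\mid X)$, $\eta(W)=1$, and $\mathrm{DCD}=d$.

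\emph{Case (ii).} Assume $D\perp Y\mid X$. Since $A$ is a function of $D$, this gives $A\perp Y\mid X$. Hence, for every $x$ with $q(x)<1$, the law of $Y$ given $(X{=}x,A{=}0)$ equals the law given $X{=}x$, so $H(Y\mid x,A{=}0)=H(Y\mid x)$ and $I^0_W=0$. For the general identity, the residual becomes $\E[(1-q(X))H(Y\mid X)]$. Under the decomposition \eqref{eq:eta-decomp}, which assumes $q(x)\equiv q(W)$, this gives $\eta(W)=q(W)$ directly.

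The only point needing care is the case of non-constant $q(x)$, where the residual is $\E[(1-q(X))H(Y\mid X)]$ rather than $(1-q(W))H(Y\mid X)$. Here I would argue that $D\perp Y\mid X$ does not by itself make $q(x)$ constant. The statement is therefore to be read under the standing convention of \eqref{eq:eta-decomp}, namely a covariate-independent window probability, which is the setting of censored ERM \citep{ausset2022empirical}. I will state this explicitly. I also note that without it one still gets $\eta(W)=\E[q(X)H(Y\mid X)]/H(Y\mid X)$, an entropy-weighted coverage that reduces to $q(W)$ when $q$ is constant. Either way, $\mathrm{DCD}=d/q(W)$.

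\emph{Case (iii).} Suppose $H(Y\mid x,A{=}0)=0$ for every $x$ with $q(x)<1$. The integrand $(1-q(x))H(Y\mid x,A{=}0)$ is then zero everywhere. So $I(Z;Y\mid X)=H(Y\mid X)$ and $\eta(W)=1$ for any value of $q(W)$, giving $\mathrm{DCD}=d$. In terms of \eqref{eq:eta-decomp}, $I^0_W=H(Y\mid X)$.

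The main obstacle is case (ii): the scope of the constant-$q$ assumption must be handled honestly. The other two cases are one-line substitutions.
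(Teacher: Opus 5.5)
Your proof is essentially the paper's. It likewise reads all three cases off \eqref{eq:identity}: the censored residual is killed by $1-q=0$ in (i), equals the unconditioned entropy in (ii), and is zero in (iii). Your explicit constant-$q$ caveat in (ii) is correct and worth keeping. The paper's one-line proof passes from $H(Y\mid X,A{=}0)=H(Y\mid X)$ to $\eta(W)=q(W)$ using $q(x)\equiv q(W)$ from \eqref{eq:eta-decomp} without saying so, and the conclusion can fail otherwise.

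You should, however, fix the sentence ``Either way, $\mathrm{DCD}=d/q(W)$'', because it contradicts your own formula. Write $e(x):=H(Y\mid X{=}x)$. Conditional independence gives $\eta(W)=\E[q(X)e(X)]/\E[e(X)]$. This equals $q(W)=\E[q(X)]$ exactly when $\mathrm{Cov}(q(X),e(X))=0$; that is the precise condition, and constancy of $q$ is one sufficient case. For a counterexample, take $X$ uniform on $\{x_1,x_2\}$ with $q(x_1)=0.9$, $q(x_2)=0.1$, $\Prob(Y{=}1\mid x_1)=\tfrac12$ and $\Prob(Y{=}1\mid x_2)=p$. Then $q(W)=\tfrac12$, while $\eta(W)=(0.9\ln 2+0.1H_2(p))/(\ln 2+H_2(p))\to 0.9$ as $p\to 0$. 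So without the assumption the DCD is $d/\eta(W)$, not $d/q(W)$.

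A smaller point concerns your convention $H(Y\mid x,A{=}0):=0$ on $\{q(x)=1\}$. It would break ``$I^0_W=0$'' in (ii) if that set carried mass with positive entropy, because the integrand of $I^0_W$ is not weighted by $1-q(X)$. Under your constant-$q<1$ reading the set is empty, so it does no harm there.
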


\begin{proof}
All three cases follow from \eqref{eq:identity}. The censored
residual term is multiplied by $1-q=0$ in case (i). It equals
$H(Y\mid X)$ in case (ii). It equals $0$ in case (iii).
\end{proof}

\begin{remark}[Local Fano characterization]
\label{rem:fano}
The $\mathrm{DCD}$ is a distribution-dependent quantity; the
shattered-set size never exceeds $d$, and what scales as $d/\eta(W)$
is the sample complexity of the constructed instances. The proof of
Theorem~\ref{thm:dcd-lb} exhibits $2^{\Omega(d)}$
$\varepsilon$-risk-separated hypotheses whose observation laws
differ by at most $C\varepsilon^2\eta(W)$ in per-sample KL, forcing
$\Theta(\mathrm{DCD}/\varepsilon^2)$ samples.
\end{remark}

\section{Main Results}
\label{sec:results}

We now develop the theory. We first prove PAC bounds in terms of the
DCD that match on the family of the lower-bound construction
(Section~\ref{sec:bounds}). We then use the delay
value itself, through quantization (Section~\ref{sec:informative})
and through expedited labels (Section~\ref{sec:expedition}). We
close with two structural predictions and a phase diagram
(Section~\ref{sec:phenomena}).

\subsection{Matching Bounds on the Lower-Bound Family}
\label{sec:bounds}

We first show that the DCD lower-bounds the sample complexity of
every learner. We then show that an oracle pseudo-label ERM attains
the rate on the hard instances that the lower bound constructs. Let
$\iota_{\max}(q):=\bigl[\ln 2-H_2\bigl(\tfrac12
+\tfrac{\min\{q,1-q\}}{2(1-q)}\bigr)\bigr]/\ln 2$
denote the largest censored-branch information ratio an indicator
delay channel can realize at balanced labels with in-window mass $q$
(derived in the proof of Theorem~\ref{thm:dcd-lb}).

\begin{theorem}[Information-theoretic lower bound]
\label{thm:dcd-lb}
For every $q\in(0,1]$ and every censored-branch information ratio
$\iota\in[0,\iota_{\max}(q)]$ realizable by a delay channel at
balanced labels, there is a distribution $P$ with $q(W)=q$ and
$I^0_W/H(Y\mid X)=\iota$, so that $\eta(W)=q+(1-q)\iota$ and
$\mathrm{DCD}=d/\eta(W)$. Under $P$, any
$(\varepsilon,\delta)$-PAC learner needs
\[
n\;\ge\;c\,\frac{d+\log(1/\delta)}{\eta(W)\,\varepsilon^2}
\;=\;\Omega\!\bigl((\mathrm{DCD}+\log(1/\delta))/\varepsilon^2\bigr).
\]
\end{theorem}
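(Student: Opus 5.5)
The plan has three steps: build an explicit instance family on a shattered set, bound the per-sample KL divergence between observation laws by $C\varepsilon^2\eta(W)$ as Remark~\ref{rem:fano} announces, and conclude with Fano's inequality over a Varshamov--Gilbert packing. For the $\log(1/\delta)$ term we add a two-point Le Cam argument.

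\emph{Construction.} Fix a set $\{x_1,\ldots,x_d\}$ shattered by $\cH$ and let $X$ be uniform on it. For a sign vector $\sigma\in\{\pm1\}^d$, put $\Prob(Y=1\mid X=x_i)=\tfrac12+\sigma_i\varepsilon$. This keeps labels near-balanced, so $H(Y\mid X)>0$ and $H(Y\mid X)\to\ln 2$.

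The delay is an indicator channel that does not depend on $x$. Given $Y$, let $A\sim\mathrm{Bernoulli}(a_Y)$ with $a_0\ne a_1$ chosen so that $\tfrac12(a_0+a_1)=q$. At balanced labels the posterior on the censored branch is
\[
\Prob(Y=1\mid A=0)=\frac{1-a_1}{2(1-q)},
\]
so the censored-branch ratio $\iota=[\ln2-H_2(\cdot)]/\ln2$ is a continuous function of $a_1-a_0$. It equals $0$ at $a_1=a_0$. Its maximum is reached when $a_1=\min\{2q,1\}$ (equivalently $a_0=0$ or $a_0=2q-1$), and this is exactly $\iota_{\max}(q)$. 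By the intermediate value theorem every $\iota\in[0,\iota_{\max}(q)]$ is attained. The $O(\varepsilon)$ perturbation of the priors shifts $q(W)$ and $\iota$ only by $O(\varepsilon)$. We absorb this either by re-tuning $(a_0,a_1)$ per $\varepsilon$ or by noting that $\eta$ changes only by a constant factor. Risk separation is standard: for $\sigma\ne\sigma'$, any $h$ incurs excess risk at least $\tfrac{2\varepsilon}{d}\,\mathrm{Ham}(h,\sigma)$. So an $\varepsilon$-accurate learner, run with rescaled $\varepsilon$, must recover a constant fraction of $\sigma$.

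\emph{KL computation (main obstacle).} The per-sample divergence between $P_\sigma$ and $P_{\sigma'}$ splits as an average over the $x_i$ where the two signs differ. At such a point, the observation $Z=(A,YA)$ is a three-outcome channel output from two Bernoulli inputs at $\tfrac12\pm\varepsilon$. We need to show
\[
\mathrm{KL}\bigl(P_{Z\mid +}\,\big\|\,P_{Z\mid -}\bigr)\le C\varepsilon^2\bigl(q+(1-q)\iota\bigr).
\]
We would use the second-order (chi-square / Fisher information) expansion. The labeled outcomes contribute about $q\cdot 8\varepsilon^2$, up to the asymmetry of $a_0,a_1$, which is bounded by constants. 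The censored outcome has mass $\tfrac12\bigl[(1-a_0)+(1-a_1)\bigr]$, and its probability changes by $\varepsilon(a_0-a_1)$. This contributes
\[
\frac{\varepsilon^2(a_1-a_0)^2}{2(1-q)}.
\]
The delicate step is comparing $(a_1-a_0)^2/(1-q)$ with $(1-q)\iota$. Since $H_2(\tfrac12+t)=\ln2-2t^2+O(t^4)$, we have $\iota\asymp t^2$ with $t=(a_1-a_0)/(2(1-q))$, so $(1-q)\iota\asymp(a_1-a_0)^2/(1-q)$. This matches up to constants, uniformly on $[0,\iota_{\max}]$, because $|t|\le\tfrac12$ and $H_2$ is strongly concave there. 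Some care is needed near $t=\tfrac12$, where $\iota=1$ and the bound simply uses $\eta\le1$.

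\emph{Conclusion.} Take a Varshamov--Gilbert packing of $2^{d/8}$ sign vectors with pairwise Hamming distance at least $d/8$. The average pairwise KL over $n$ samples is at most $nC\varepsilon^2\eta$. Fano's inequality then forces $n\ge c\,d/(\eta\varepsilon^2)$. For the $\log(1/\delta)$ term, compare two hypotheses that differ on a single $x$ carrying mass about $1$ (or use $d=1$). By Le Cam's bound, $\delta\ge\tfrac14 e^{-nC\varepsilon^2\eta}$, which gives $n\ge c\log(1/\delta)/(\eta\varepsilon^2)$. Summing the two bounds, up to a constant, proves the theorem.
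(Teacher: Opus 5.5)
Your proposal is correct and takes essentially the same route as the paper. The shared steps are: a Varshamov--Gilbert packing on a shattered set with labels $\tfrac12\pm\varepsilon$ and a fixed indicator delay channel; a three-outcome $\chi^2$ bound on the per-sample KL, whose censored term is matched to $(1-q)\iota$ through the quadratic entropy sandwich; Fano for the $d$-term and a two-point bound for the $\log(1/\delta)$-term; and a sweep of $(a_1,a_0)$ at fixed mean to realize every $\iota\le\iota_{\max}(q)$. There is one harmless slip: the censored-posterior deviation is $s=(a_0-a_1)/(4(1-q))$, not $(a_1-a_0)/(2(1-q))$, so $|s|\le\tfrac12$ while your $t$ can reach $1$, which changes only constants.
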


\begin{proof}[Proof sketch]
Varshamov--Gilbert gives a packing $\Sigma\subseteq\{0,1\}^d$ of size
$\ge 2^{d/8}$ with pairwise Hamming distance $\ge d/4$. For each
$\sigma\in\Sigma$ define $P_\sigma$ with
$\Prob(Y=1\mid X=x_i)=\tfrac12+\varepsilon(2\sigma_i-1)$ on a
VC-shattered set. The delay channel
$a_y=\Prob(D\le W\mid Y{=}y,x_i)$ is held fixed across $\sigma$. Write
$\bar a=(a_1+a_0)/2$ and $\gamma_W=a_0-a_1$. The censoring
probability $\Prob_\sigma(A=0\mid x_i)=1-\bar a+\varepsilon\gamma_W$
itself depends on $\sigma_i$. The observation must therefore be
treated as a single three-outcome law,
$(A,YA)\in\{(1,1),(1,0),(0,0)\}$ with probabilities
$\bigl((\tfrac12+\varepsilon)a_1,\,(\tfrac12-\varepsilon)a_0,\,
1-\bar a+\varepsilon\gamma_W\bigr)$.
The exact three-term $\chi^2$ computation, the bound
$|\gamma_W|\le 2\min\{\bar a,1-\bar a\}$, and the entropy sandwich
$2s^2\le\ln 2-H_2(\tfrac12+s)\le 4s^2$ applied to the censored
posterior $\tfrac12+s$, $s=\gamma_W/(4(1-\bar a))$, give for
$\varepsilon\le 1/8$
\[
\mathrm{KL}_i(P_\sigma\|P_{\sigma'})\le
C\,\varepsilon^2\Bigl[\bar a(x_i)+\bigl(1-\bar a(x_i)\bigr)
\tfrac{I^0(x_i)}{H(Y\mid X)}\Bigr],\quad C\le 34.
\]
By \eqref{eq:identity}, the bracket averages over the design exactly
to $\eta(W)$. The joint KL of $n$ samples is therefore at most
$C n\varepsilon^2\eta(W)$. Fano then yields the $d$-term. The same
construction on a single point with the two-point
Bretagnolle--Huber bound \citep{tsybakov2009introduction} yields the
$\log(1/\delta)$-term. Sweeping $(a_1,a_0)$ at fixed mean $q$
realizes every $\iota\in[0,\iota_{\max}(q)]$. The full
computation appears in the full version.
\end{proof}

\begin{remark}[Achievable coverage at balanced labels]
\label{rem:balanced}
Within the construction ($p=\tfrac12$), the censored-branch ratio
$\iota(x)=I^0(x)/H(Y\mid x)$ at each point obeys
$\iota(x)\le 4s(x)^2/\ln 2$. The bound
$|\gamma_W|\le 2\min\{\bar a,1-\bar a\}$ then gives
$(1-\bar a)\,\iota(x)\le \bar a/\ln 2$ pointwise. Therefore
\[
\eta(W)\;\le\;\Bigl(1+\frac{1}{\ln 2}\Bigr)\,q(W)\;\le\;2.5\,q(W)
\]
for every delay channel with balanced labels. Coverage far
exceeding $q(W)$ is an unbalanced-label phenomenon
(Remark~\ref{rem:tightness}).
\end{remark}

The upper bounds of this paper are stated for an oracle learner. We
record the assumption once and cite it where it is used.

\begin{assumption}[Oracle pseudo-label posterior]
\label{ass:oracle}
The learner has access to the calibrated posterior
$\hat Y(x,a)=\E[Y\mid X{=}x,A{=}a]$, and in
Section~\ref{sec:informative} to its quantized version
$\hat Y(x,q)=\E[Y\mid X{=}x,Q_M{=}q]$. The label is unobserved
exactly when $A=0$, so this posterior is an oracle quantity;
estimating it from censored data is discussed in
Remark~\ref{rem:plugin} and open.
\end{assumption}

\begin{theorem}[Oracle pseudo-label ERM upper bound]
\label{thm:dcd-ub}
Under Assumption~\ref{ass:oracle}, pseudo-label ERM returns $\hat h$
with
\[
R(\hat h)-\inf_{h\in\cH}R(h)
\le C\sqrt{\tfrac{d/q(W)+\log(1/\delta)}{n}}
\]
up to logarithmic factors, with probability at least $1-\delta$.
\end{theorem}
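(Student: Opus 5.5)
The plan is to reduce pseudo-label ERM to ERM under a surrogate loss whose expectation equals the true risk up to a constant, and then apply a uniform deviation bound over the VC class. Define the pseudo-label $\tilde Y = Y$ on $\{A=1\}$ and $\tilde Y=\hat Y(X,0)$ on $\{A=0\}$, the latter supplied by Assumption~\ref{ass:oracle}. For $h\in\cH$ define the soft loss
\[
\tilde\ell(h;Z)=\tilde Y\,\mathbf{1}\{h(X)=0\}+(1-\tilde Y)\,\mathbf{1}\{h(X)=1\}.
\]
Pseudo-label ERM returns the minimizer $\hat h$ of the empirical average $\tilde R_n(h)=\frac1n\sum_i\tilde\ell(h;Z_i)$.

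\textbf{Step 1 (unbiasedness).} Condition on $(X,A)$. Since $\hat Y(x,a)=\E[Y\mid X{=}x,A{=}a]$ and the loss is affine in the label, we get $\E[\tilde\ell(h;Z)\mid X,A]=\E[\mathbf{1}\{h(X)\ne Y\}\mid X,A]$. Taking expectations gives $\E\tilde R_n(h)=R(h)$ for every $h$. The standard ERM decomposition then yields
\[
R(\hat h)-\inf_{h\in\cH} R(h)\le 2\sup_{h\in\cH}|\tilde R_n(h)-R(h)|.
\]

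\textbf{Step 2 (uniform deviation).} The class $\{\tilde\ell(h;\cdot)\}$ is a fixed affine image of the indicator class $\{\mathbf{1}\{h(x)=1\}\}$. Its Rademacher complexity is therefore at most a constant times that of $\cH$, which is $\sqrt{d\log n/n}$. McDiarmid's inequality adds the $\sqrt{\log(1/\delta)/n}$ term. This already gives $C\sqrt{(d+\log(1/\delta))/n}$ up to logs, which is stronger than the stated bound because $d\le d/q(W)$. The statement as written therefore follows.

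\textbf{Step 3 (matching the stated scaling and the lower-bound family).} A sharper, variance-aware version would explain the $d/q(W)$ form. It splits the deviation into a labeled part, carried by the roughly $nq(W)$ in-window samples, and a censored part whose fluctuation is governed by $\hat Y(X,0)$. Applying a Bernstein/local-Rademacher bound to each part, and relaxing the censored-part variance crudely, gives $\sqrt{(d/q(W)+\log(1/\delta))/n}$. On the family of Theorem~\ref{thm:dcd-lb}, Remark~\ref{rem:balanced} gives $q(W)\le\eta(W)\le 2.5\,q(W)$. Hence $d/q(W)\le 2.5\,\mathrm{DCD}$, and the rate matches Theorem~\ref{thm:dcd-lb} up to constants and logs.

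\textbf{Main obstacle.} The difficulty is Step 3 if one insists on the refined variance-dependent form. There, the $\varepsilon^2$-scale excess risk must be tied to a per-sample variance of order $\varepsilon^2 q(W)$ without the realizable simplification. I would first try a Bernstein-type localized bound using the fact that the soft loss has conditional variance at most that of the true label. If this proves delicate, I would fall back on the crude Step 2 bound, which suffices for the statement.
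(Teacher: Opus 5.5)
Your proof is correct. It shares the paper's skeleton but stops earlier, and so reaches a stronger conclusion. Your Step~1 is the soft 0-1 form of Lemma~\ref{lem:biasvar}. For $h(x)\in\{0,1\}$, $(h(x)-\tilde Y)^2=h(x)(1-2\tilde Y)+\tilde Y^2$ has the same $h$-dependence as your $\tilde\ell$, so your ERM and the paper's square-loss ERM on $\cH'$ return the same $\hat h$. Your argument is also unchanged if $\hat Y(X,1)$ replaces $Y$ on labeled points.

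From there the paper uses three ingredients:
\begin{enumerate}
\item the lifted-class Rademacher bound of Lemma~\ref{lem:rad};
\item a Massart-noise calibration step;
\item a $1/q(W)$ factor said to enter ``through the lifted-class step''.
\end{enumerate}
You instead apply VC uniform convergence directly to bounded, unbiased losses over all $n$ samples. This gives a complete proof of the strictly stronger bound $C\sqrt{(d+\log(1/\delta))/n}$ up to logarithms. It also shows that, under Assumption~\ref{ass:oracle}, the paper's extra ingredients are superfluous. Calibration is unnecessary because for binary $h$ and $Y$ the square loss already is the 0-1 loss. The $1/q(W)$ factor is not produced by anything in Lemma~\ref{lem:rad}, and it is not needed, since each censored sample contributes a $[0,1]$-valued term with the correct conditional mean.

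In exchange, the paper's route isolates the residual variance $\E[\mathrm{Var}(Y\mid X,Q_M)]\le H(Y\mid X)-I(Y;Q_M\mid X)$, which it reuses in Theorems~\ref{thm:info-tradeoff} and~\ref{thm:neff}. Your Step~2 has no variance-dependent term, though a Bernstein bound applied to $\tilde\ell$ would supply one.

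One precision for Step~2: $g\mapsto\tilde Y+(1-2\tilde Y)g$ is not a fixed affine map, because its slope varies with the sample point. Justify the step by coordinate-wise contraction instead. Alternatively, note that the loss vector on $Z_1,\ldots,Z_n$ is determined by the labeling that $h$ induces on $X_1,\ldots,X_n$; Massart's finite-class lemma with Sauer--Shelah then gives $\sqrt{2d\ln(en/d)/n}$.

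Your Step~3 is unnecessary, and it sits uneasily with Step~2. The oracle learner attains a $q$-free rate. On the family of Theorem~\ref{thm:dcd-lb}, where $\eta(W)\le 2.5\,q(W)$, its guaranteed excess risk is therefore, up to constants and logarithms, a factor $\sqrt{\eta(W)}$ below the lower bound $\sqrt{c\,(d+\log(1/\delta))/(n\eta(W))}$. That factor tends to $0$ as $q(W)\to0$. Even the stated bound falls below the lower bound when $\log(1/\delta)$ dominates, since its confidence term is $\log(1/\delta)/n$ rather than $\log(1/\delta)/(n\eta(W))$.

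There is no contradiction only because $\hat Y(\cdot,0)$ depends on the unknown $P_\sigma$. In the $D\perp Y$ member of the family, $\hat Y(x_i,0)=\tfrac12+\varepsilon(2\sigma_i-1)$ reveals $\sigma_i$ outright. The ``match'' in Step~3 therefore compares an oracle upper bound with a lower bound for learners without the oracle, and it holds only after loosening the bound to $d/q(W)$. Do not present it as evidence of tightness.
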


Remark~\ref{rem:balanced} gives
$\eta(W)\le(1+1/\ln 2)\,q(W)\le 2.5\,q(W)$ for every balanced-label
delay channel. The bound therefore matches Theorem~\ref{thm:dcd-lb}
on every instance family realized by the lower-bound construction,
and the rate is $O(\sqrt{\mathrm{DCD}/n})$ there up to constants and
logarithmic factors.

\begin{proof}[Proof sketch]
Construct pseudo-labels $\hat Y(x,a)=\E[Y\mid X=x,A=a]$
(Assumption~\ref{ass:oracle}). Run ERM on
the lifted class $\cH'=\{(x,a)\mapsto h(x)\}$ with the calibrated
square-loss surrogate. The bias-variance decomposition
(Lemma~\ref{lem:biasvar}, stated with the other supporting lemmas in
Section~\ref{sec:informative}) shows that the residual variance is at
most $H(Y\mid X)-I_W$ (Lemma~\ref{lem:quantize}, via data processing).
Lemma~\ref{lem:rad} controls the lifted-class Rademacher
complexity at $O(\sqrt{d/n})$. The two combine, via the Massart-noise
calibration of \citet{massart2006risk} and the $1/q(W)$ censoring
factor of the lifted-class step, to $O(\sqrt{d/(n q(W))})$. The
constants are tracked in the full version.
\end{proof}

\begin{remark}[Tightness in the remaining regime]
\label{rem:tightness}
Under unbalanced label priors, $\eta(W)$ can far exceed $q(W)$ (the
heavily censored, informative regime), and there the upper bound is
within a factor $\eta(W)/q(W)$ of the lower bound. The
calibrated pseudo-label $\hat Y(X,A)$ is locally Fisher-efficient. Its
per-sample Fisher information matches the $\chi^2$ rate charged by the
lower bound. We therefore conjecture that the
$O(\sqrt{\mathrm{DCD}/n})$ rate holds in this regime as well. A
variance-weighted uniform-convergence argument would settle it, and
we leave this open (Section~\ref{sec:discussion}).
\end{remark}

\subsection{Delay Informativeness via Quantization}
\label{sec:informative}

We now use the delay value itself. The delay can be quantized into
$M$ bins, written $Q_M=f_M(D\wedge W)$. The information the bins
retain lowers the residual variance of pseudo-label ERM. In the
oracle setting the complexity term does not grow with $M$
(Lemma~\ref{lem:rad}); the price of finer bins is paid in
calibration (Remark~\ref{rem:plugin}).

\begin{lemma}[Quantization preserves information]
\label{lem:quantize}
For any $M\ge 2$ there is a deterministic quantizer $f_M$ such that
$I(Y;f_M(D\wedge W)\mid X)\ge I_W$.
\end{lemma}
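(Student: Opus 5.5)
The plan is to build $f_M$ so that the bin variable determines the censoring indicator $A=\mathbf{1}\{D\le W\}$. The claim then follows from the data-processing inequality, since $I_W=I(Y;A\mid X)$.

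\textbf{Construction.} Write $\tilde D=D\wedge W\in[0,W]$. Take any partition $0=t_0<t_1<\dots<t_{M-1}=W$ and define
\[
f_M(\tilde D)=\begin{cases} j, & \tilde D\in[t_{j-1},t_j),\ 1\le j\le M-1,\\ M, & \tilde D=W.\end{cases}
\]
This is the only place where $M\ge 2$ is used: at least one bin must be reserved for the atom at $W$.

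\textbf{Recovering $A$.} The difficulty is the tie $D=W$. With the truncation $D\wedge W$, the events $\{D=W\}$ and $\{D>W\}$ both map to $\tilde D=W$. The first has $A=1$ and the second has $A=0$. So $\tilde D$ alone determines $A$ only when $\Prob(D=W)=0$, which holds for instance when $D$ has no atom at $W$. I would handle this in one of two ways. Either impose this mild condition, or read the quantizer's input as the observed pair $(D\wedge W, A)$. That pair is what the observation model actually reveals, since $A$ is always observed. In the second case, let the $M$-th bin be $\{A=0\}$ and partition $\{A=1\}$ into the first $M-1$ bins. Either way there is a deterministic map $g$ with $A=g(Q_M)$, where $g(j)=\mathbf{1}\{j\le M-1\}$.

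\textbf{Data processing.} Conditionally on $X$, the chain $Y\to Q_M\to A$ is Markov, because $A$ is a function of $Q_M$. The conditional data-processing inequality, applied pointwise in $x$ and averaged over $X$, gives
\[
I(Y;Q_M\mid X)\ge I(Y;A\mid X)=I_W.
\]
Equivalently, the chain rule gives $I(Y;Q_M\mid X)=I(Y;A\mid X)+I(Y;Q_M\mid X,A)$, and the second term is nonnegative. This second form also shows where the gain beyond $I_W$ comes from: the in-window delay bins.

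The main obstacle is the boundary atom just described, and the resolution I would try first is to adopt the $(D\wedge W, A)$ convention. Two further remarks are optional. First, refining the partition of $[0,W)$ can only increase the mutual information, again by data processing, so the bound is monotone under refinement. Second, one could note measurability of $f_M$, but that is immediate because the bins are intervals.
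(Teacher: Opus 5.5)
Your proposal is correct and follows the paper's argument: take a partition refining the in-window indicator so that $A$ is a deterministic function of $Q_M$, then apply conditional data processing to get $I(Y;Q_M\mid X)\ge I(Y;A\mid X)=I_W$. Your treatment of the atom $\{D=W\}$ improves on the paper's one-line proof, which skips it, and one of your two fixes is actually required: if $D\in\{W,\infty\}$ almost surely with a label-dependent split, then $D\wedge W\equiv W$, every $f_M(D\wedge W)$ is constant, and the inequality fails while $I_W>0$.
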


\begin{proof}
Take any quantizer whose partition refines the in-window indicator
(the successive-refinement construction on the posterior is one such
choice). Then $A$ is a deterministic function of $Q_M=f_M(D\wedge W)$,
and data processing gives
$I(Y;Q_M\mid X)\ge I(Y;A\mid X)=I_W$.
Refining further only increases $I(Y;Q_M\mid X)$ toward
$I(Y;D\wedge W\mid X)$.
\end{proof}

\begin{lemma}[Bias-variance decomposition]
\label{lem:biasvar}
For $Q_M=f_M(D\wedge W)$ and $\hat Y(x,q)=\E[Y\mid X=x,Q_M=q]$,
$\E[(h(X)-Y)^2]=\E[(h(X)-\hat Y(X,Q_M))^2]+\E[\mathrm{Var}(Y\mid X,Q_M)]$,
and $\E[\mathrm{Var}(Y\mid X,Q_M)]\le H(Y\mid X)-I(Y;Q_M\mid X)$.
\end{lemma}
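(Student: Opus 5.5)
The plan has two parts: the identity comes from orthogonality of conditional expectation, and the entropy bound from an elementary pointwise inequality between Bernoulli variance and binary entropy.

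\textbf{The decomposition.} Write $h(X)-Y=\bigl(h(X)-\hat Y(X,Q_M)\bigr)+\bigl(\hat Y(X,Q_M)-Y\bigr)$ and expand the square. The cross term is
\[
2\,\E\bigl[(h(X)-\hat Y(X,Q_M))(\hat Y(X,Q_M)-Y)\bigr].
\]
Here $h(X)-\hat Y(X,Q_M)$ is a function of $(X,Q_M)$. Conditioning on $(X,Q_M)$ and using the tower property, the cross term equals
\[
2\,\E\bigl[(h(X)-\hat Y)\,(\hat Y-\E[Y\mid X,Q_M])\bigr]=0,
\]
since $\hat Y=\E[Y\mid X,Q_M]$ by definition. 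The remaining square term is $\E[(\hat Y-Y)^2]=\E[\mathrm{Var}(Y\mid X,Q_M)]$, again by the tower property. Everything is bounded because $Y\in\{0,1\}$ and $h\in\{0,1\}$, so no integrability issues arise.

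\textbf{The entropy bound.} Conditional on $(X,Q_M)=(x,q)$, $Y$ is Bernoulli with parameter $p=\hat Y(x,q)$. Hence $\mathrm{Var}(Y\mid x,q)=p(1-p)$ and $H(Y\mid x,q)=H_2(p)$ in nats. I will show the pointwise inequality $p(1-p)\le H_2(p)$. From $-\ln p\ge 1-p$ we get $-p\ln p\ge p(1-p)$, and symmetrically $-(1-p)\ln(1-p)\ge p(1-p)$. Adding these gives
\[
H_2(p)\ge 2p(1-p)\ge p(1-p),
\]
with the conventions $0\ln 0=0$ handling $p\in\{0,1\}$. Averaging over $(X,Q_M)$ then yields
\[
\E[\mathrm{Var}(Y\mid X,Q_M)]\le H(Y\mid X,Q_M).
\]
Finally, $H(Y\mid X,Q_M)=H(Y\mid X)-I(Y;Q_M\mid X)$ by the definition of conditional mutual information, which is the stated bound. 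In fact the argument gives the bound with an extra factor $\tfrac12$ to spare.

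\textbf{Main obstacle.} There is no real obstacle. The only point needing care is that the entropies are measured in nats, since the comparison constant between $p(1-p)$ and $H_2(p)$ depends on the logarithm base. The elementary inequality $-\ln p\ge 1-p$ settles this directly, so no appeal to Pinsker-type bounds is needed.
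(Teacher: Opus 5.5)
Your proof is correct and follows the same route as the paper: the cross term vanishes by the tower property, the pointwise bound $p(1-p)\le H_2(p)$ is averaged over $(X,Q_M)$, and the identity $H(Y\mid X,Q_M)=H(Y\mid X)-I(Y;Q_M\mid X)$ finishes the argument. The only difference is that the paper simply asserts the pointwise inequality, whereas you derive it in nats via $-\ln p\ge 1-p$, which also gives the sharper $H_2(p)\ge 2p(1-p)$.
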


\begin{proof}
The cross term vanishes by the tower property, since
$\hat Y(X,Q_M)=\E[Y\mid X,Q_M]$. For the entropy bound,
$\mathrm{Var}(Y\mid X,Q_M)=\rho(1-\rho)\le H_2(\rho)$ with
$\rho=\Prob(Y{=}1\mid X,Q_M)$. Taking expectations and using
$H(Y\mid X,Q_M)=H(Y\mid X)-I(Y;Q_M\mid X)$ gives the claim.
\end{proof}

\begin{lemma}[Lifted-class complexity]
\label{lem:rad}
The lifted class $\cH'=\{(x,q)\mapsto h(x)\}$ satisfies
$\mathrm{VC}(\cH')=d$, and its empirical Rademacher complexity is
$O(\sqrt{d/n})$. Neither quantity depends on $M$.
\end{lemma}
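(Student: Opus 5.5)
The argument reduces both claims to the fact that every member of $\cH'$ ignores its second coordinate. Write $\pi:(x,q)\mapsto x$ for the projection. Every $h'\in\cH'$ has the form $h'=h\circ\pi$ for some $h\in\cH$. The map $h\mapsto h\circ\pi$ is a bijection onto $\cH'$ whenever $\pi$ is surjective onto the support of $X$, which holds because each $x$ occurs with at least one value of $Q_M$. I treat the two quantities separately and then observe that the quantizer $f_M$ never enters either argument.

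For the VC dimension, I show both inequalities directly. For the lower bound, take a set $\{x_1,\dots,x_d\}$ shattered by $\cH$ and lift it to $\{(x_i,q_0)\}$ for any fixed bin value $q_0$. Since $h\circ\pi(x_i,q_0)=h(x_i)$, the lifted set is shattered, so $\mathrm{VC}(\cH')\ge d$.

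For the upper bound, suppose $\cH'$ shatters $\{(x_1,q_1),\dots,(x_m,q_m)\}$. If two points shared the same $x$-coordinate, say $x_i=x_j$ with $i\ne j$, every $h'$ would give them the same label. The labeling that separates them would then be unrealizable, so all $x_i$ must be distinct. Every labeling of the lifted points is then realized by some $h\circ\pi$, which means the same labeling of $\{x_1,\dots,x_m\}$ is realized by $h$. Hence $\{x_1,\dots,x_m\}$ is shattered by $\cH$, so $m\le d$, and therefore $\mathrm{VC}(\cH')=d$.

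For the Rademacher complexity, fix a sample $S'=((x_1,q_1),\dots,(x_n,q_n))$. Because $h\circ\pi(x_i,q_i)=h(x_i)$, we have
\[
\hat{\mathfrak R}_{S'}(\cH')=\E_\sigma\sup_{h\in\cH}\frac1n\sum_{i}\sigma_i h(x_i)=\hat{\mathfrak R}_{S}(\cH),
\]
where $S=(x_1,\dots,x_n)$. Repeated $x_i$ cause no difficulty here, since the identity holds term by term.

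The classical bound via Sauer--Shelah and Massart's finite-class lemma gives $\hat{\mathfrak R}_S(\cH)\le\sqrt{2d\ln(en/d)/n}$. Dudley chaining with Haussler's packing bound removes the logarithm and gives $O(\sqrt{d/n})$; I would cite this.

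If the surrogate loss class, rather than $\cH'$ itself, is what the upper-bound proof needs, I would apply Talagrand's contraction lemma to the square loss against $\hat Y\in[0,1]$. That loss is $2$-Lipschitz in its first argument on $[0,1]$, so the complexity grows by only a constant factor.

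The step I expect to be the main obstacle is purely one of convention: reconciling the sharp $O(\sqrt{d/n})$ rate with the "up to logarithmic factors" phrasing of Theorem~\ref{thm:dcd-ub}. I would state the chaining version, and note that the elementary Massart bound already suffices for the theorem.

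The independence from $M$ is then immediate. Neither argument refers to the range of $Q_M$ or to the choice of $f_M$, so both $\mathrm{VC}(\cH')$ and the Rademacher bound are the same for every $M$.
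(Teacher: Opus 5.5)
Your proposal is correct and follows the paper's proof. The VC argument is exactly the paper's: a repeated $x$-coordinate blocks shattering, and distinct coordinates reduce to shattering by $\cH$. The Rademacher bound takes the same Sauer--Shelah route, applied through your identity that the empirical Rademacher complexity of $\cH'$ on the lifted sample equals that of $\cH$ on the projected sample. Your note that Sauer--Shelah with Massart's lemma alone gives only $O(\sqrt{d\ln(en/d)/n})$, and that chaining with Haussler's packing bound is needed to remove the logarithm, usefully sharpens the paper's one-line citation. The theorems that use the lemma are stated up to logarithmic factors in any case.
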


\begin{proof}
Members of $\cH'$ ignore the bin argument. A set of pairs
$\{(x_i,q_i)\}$ containing a repeated $x_i$ cannot be shattered, and
a set with distinct $x_i$ is shattered by $\cH'$ exactly when
$\{x_i\}$ is shattered by $\cH$. Hence
$\mathrm{VC}(\cH')=\mathrm{VC}(\cH)=d$. Sauer--Shelah with
symmetrization (\citealt{devroye1996probabilistic}) yields the
Rademacher bound.
\end{proof}

\begin{theorem}[Quantized oracle pseudo-label ERM]
\label{thm:info-tradeoff}
Under Assumption~\ref{ass:oracle}, ERM on $\cH'$ with pseudo-labels
$\hat Y(X,Q_M)$ achieves, with probability at least $1-\delta$,
\begin{multline*}
R(\hat h_M)-\inf_{h\in\cH}R(h)\\
\le C_1\sqrt{\tfrac{d/q(W)+\log(1/\delta)}{n}}
+C_2\sqrt{\tfrac{H(Y\!\mid\!X)-I(Y;Q_M\!\mid\!X)}{n}}
\end{multline*}
up to logarithmic factors. The bound is monotone improving in the
partition: $I(Y;Q_M\mid X)\ge I_W$ for every $M\ge 2$, and refining
$f_M$ never decreases it (Lemma~\ref{lem:quantize}).
\end{theorem}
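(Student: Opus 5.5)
The plan follows the proof of Theorem~\ref{thm:dcd-ub}, with the binary pseudo-label $\hat Y(X,A)$ replaced by the quantized posterior $\hat Y(X,Q_M)$ of Assumption~\ref{ass:oracle}. The excess risk splits into an estimation term, which Lemma~\ref{lem:rad} controls independently of $M$, and a noise term, which Lemma~\ref{lem:biasvar} controls through the residual variance. First, Lemma~\ref{lem:biasvar} gives, for every $h\in\cH$ viewed as a member of $\cH'$,
\[
\E[(h(X)-Y)^2]=\E[(h(X)-\hat Y(X,Q_M))^2]+\E[\mathrm{Var}(Y\mid X,Q_M)].
\]
The second term does not depend on $h$. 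Hence minimizing the pseudo-label square loss over $\cH'$ has the same population minimizer as the true square loss. For $\{0,1\}$-valued $h$ this square loss equals $R(h)$, so pseudo-label ERM targets the right risk.

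Second, I would write the empirical objective as $\hat L_n(h)=\frac1n\sum_i(h(X_i)-\hat Y(X_i,Q_{M,i}))^2$ and decompose the deviation $\hat L_n(h)-L(h)$ into two parts. The first part is a uniform-convergence term for the bounded class $\{(x,q)\mapsto(h(x)-\hat Y(x,q))^2\}$. The second part is a centered fluctuation driven by the noise $Y-\hat Y(X,Q_M)$, whose variance is $\E[\mathrm{Var}(Y\mid X,Q_M)]$. For the first part I would use Lemma~\ref{lem:rad}: $\mathrm{VC}(\cH')=d$, with Rademacher complexity $O(\sqrt{d/n})$ uniformly in $M$, and a contraction step, since the square loss is $2$-Lipschitz on $[0,1]$. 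As in Theorem~\ref{thm:dcd-ub}, I would charge the factor $1/q(W)$ by restricting the symmetrization to the in-window subsample of expected size $nq(W)$, where the pseudo-label coincides with $Y$. The censored samples contribute only through the oracle-determined $\hat Y$. A Chernoff bound on the number of in-window samples and McDiarmid then give the $C_1\sqrt{(d/q(W)+\log(1/\delta))/n}$ term.

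Third, the fluctuation term is handled by Bernstein's inequality applied to the comparison between $\hat h_M$ and a near-minimizer $h^\star$. The variance proxy is at most $\E[\mathrm{Var}(Y\mid X,Q_M)]\le H(Y\mid X)-I(Y;Q_M\mid X)$ by the entropy bound in Lemma~\ref{lem:biasvar}. This yields the $C_2\sqrt{(H(Y\mid X)-I(Y;Q_M\mid X))/n}$ term, with the $\log(1/\delta)$ and logarithmic factors absorbed. I would then pass from square-loss excess to $0$-$1$ excess risk, $R(\hat h_M)-\inf R$, through the Massart-noise calibration of \citet{massart2006risk}, exactly as in Theorem~\ref{thm:dcd-ub}. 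Adding the two terms gives the displayed bound. The monotonicity claim follows from Lemma~\ref{lem:quantize}: $I(Y;Q_M\mid X)\ge I_W$ for any partition refining $A$, and further refinement can only increase $I(Y;Q_M\mid X)$ by data processing, so the second term is non-increasing in the partition while the first is constant in $M$.

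The main obstacle is the second step, namely keeping the estimation term at $d/q(W)$ uniformly in $M$ while the noise term shrinks. Finer bins make $\hat Y(X,Q_M)$ a richer target. One therefore has to argue that, because $\hat Y$ is an oracle (Assumption~\ref{ass:oracle}) and $\cH'$ ignores the bin coordinate, the target contributes no extra capacity to the loss class. I would first try to show this by conditioning on $(X_i,Q_{M,i})$ and treating $\hat Y$ as a fixed bounded function, so the loss class inherits the covering numbers of $\cH$ directly.
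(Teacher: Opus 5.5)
Your outline is the paper's own sketch: Lemma~\ref{lem:biasvar} for the split, Lemma~\ref{lem:rad} for $M$-independent uniform deviations, a calibration step, and Lemma~\ref{lem:quantize} for monotonicity. Your monotonicity argument is fine. The problems are in Steps~2 and~3, where you make the sketch concrete.

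\textbf{Step~2.} The in-window restriction cannot produce $1/q(W)$, for three reasons. First, under Assumption~\ref{ass:oracle} the pseudo-label on an in-window point is $\E[Y\mid X,Q_M]$, not $Y$. Second, the censored terms $\frac1n\sum_{A_i=0}(h(X_i)-\hat Y_i)^2$ still depend on $h$ and on the random $X_i$. They need their own uniform bound; being ``oracle-determined'' does not make them deterministic. Third, $\hat L_n$ is normalized by $n$, so the in-window part fluctuates by at most $O(\sqrt{d/n})$. A factor $1/q(W)$ appears only if you normalize by the in-window count, i.e.\ run ERM on in-window points only. That is censored ERM, not the learner in the statement.

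\textbf{Step~3.} There is no fluctuation driven by $Y-\hat Y$. $Y$ never enters $\hat L_n$, which depends only on $(X_i,Q_{M,i})$. The term $\E[\mathrm{Var}(Y\mid X,Q_M)]$ appears only as an $h$-independent constant that cancels from the excess risk. Your variance proxy is also wrong for the quantity Bernstein would actually control. For binary $h,h'$, $(\ell_h-\ell_{h'})^2=(h-h')^2\bigl(1-4\hat Y(1-\hat Y)\bigr)$ with $\hat Y(1-\hat Y)=\mathrm{Var}(Y\mid X,Q_M)$. This is not bounded by $\E[\mathrm{Var}(Y\mid X,Q_M)]$, and it shrinks as the residual variance grows.

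\textbf{Neither step is needed.} Your remark under ``main obstacle'' is correct: $\hat Y$ is a fixed $[0,1]$-valued function, so the loss class $\{(x,q)\mapsto(h(x)-\hat Y(x,q))^2\}$ is indexed by $\cH$ alone. Contraction gives it at most twice the Rademacher complexity of $\cH'$. McDiarmid over all $n$ samples then gives $\sup_h|\hat L_n(h)-\E[(h(X)-\hat Y)^2]|\le C\sqrt{(d+\log(1/\delta))/n}$, up to logarithms. Your own Step~1 observation, $\E[(h(X)-Y)^2]=R(h)$ for binary $h$, combined with Lemma~\ref{lem:biasvar}, shows that excess pseudo-label square loss equals $R(h)-\inf_{h}R(h)$ exactly. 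Hence $R(\hat h_M)-\inf_{h\in\cH}R(h)\le 2C\sqrt{(d+\log(1/\delta))/n}$.

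No Massart step is required; invoking it would import a margin condition the statement does not assume. This already implies the displayed bound, since $q(W)\le1$ and $H(Y\mid X)-I(Y;Q_M\mid X)=H(Y\mid X,Q_M)\ge0$. Both the $1/q(W)$ factor and the second term are slack, not derived. So do not try to build a mechanism for the paper's phrase that $1/q(W)$ ``enters through the lifted-class step.'' Under the oracle assumption no such mechanism is needed. The faster $\sqrt{d/n}$ rate is compatible with Theorem~\ref{thm:dcd-lb} only because the oracle posterior is knowledge of $P$ that the lower bound's learners lack.
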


\begin{theorem}[Effective sample size]
\label{thm:neff}
With the indicator quantizer ($M=2$), oracle pseudo-label ERM
achieves excess risk $C\sqrt{(d/q(W)+\log(1/\delta))/n}$ up to
logarithmic factors, the $M=2$ case of
Theorem~\ref{thm:info-tradeoff} and the bound of
Theorem~\ref{thm:dcd-ub}. The variance-dependent term improves by
the factor $1-I_W/H(Y\mid X)$. Equivalently, the effective sample
size is $n_{\mathrm{eff}}=c\,n\,q(W)$. Informative delay never
degrades the guarantee. It reduces the residual label variance.
\end{theorem}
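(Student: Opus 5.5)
The plan is to obtain Theorem~\ref{thm:neff} as a corollary of Theorem~\ref{thm:info-tradeoff} at $M=2$ and then rewrite the two terms of that bound. First fix the indicator quantizer $f_2(D\wedge W)=\mathbf{1}\{D\le W\}$, so that $Q_2=A$. The pseudo-label posterior $\hat Y(X,Q_2)$ then coincides with $\hat Y(X,A)$ of Assumption~\ref{ass:oracle}. Likewise $I(Y;Q_2\mid X)=I(Y;A\mid X)=I_W$ holds with equality; this is the base case of Lemma~\ref{lem:quantize}, and no data-processing slack is needed. Substituting into Theorem~\ref{thm:info-tradeoff} gives
\[
R(\hat h_2)-\inf_{h\in\cH}R(h)\le C_1\sqrt{\tfrac{d/q(W)+\log(1/\delta)}{n}}+C_2\sqrt{\tfrac{H(Y\mid X)-I_W}{n}}.
\]

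Next I absorb the second term into the first. Since $H(Y\mid X)\le\ln 2$ and $I_W\ge 0$, the variance term is at most $C_2\sqrt{\ln 2/n}$. Also $d\ge 1$ and $q(W)\le 1$, so $d/q(W)\ge 1$. Hence the variance term is dominated by a constant multiple of the complexity term, and we obtain $C\sqrt{(d/q(W)+\log(1/\delta))/n}$ with $C=C_1+C_2\sqrt{\ln 2}$. Because this is literally the bound of Theorem~\ref{thm:dcd-ub}, the consistency claim follows.

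For the factor claim, I factor out $H(Y\mid X)$ to write the variance term as
\[
C_2\sqrt{\tfrac{H(Y\mid X)\,(1-I_W/H(Y\mid X))}{n}}.
\]
Compared with the non-informative case ($I_W=0$, case (ii) of Proposition~\ref{prop:limits}), the squared variance term is multiplied by $1-I_W/H(Y\mid X)\in[0,1]$. This gives both the improvement and the claim that informative delay ``never degrades'': since $I_W\ge 0$ by nonnegativity of mutual information, the factor never exceeds one. For the effective sample size, I define $n_{\mathrm{eff}}$ as the $m$ for which the classical no-delay VC rate $\sqrt{(d+\log(1/\delta))/m}$ equals the dominant term. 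Solving $(d+\log(1/\delta))/m \asymp (d/q(W)+\log(1/\delta))/n$ and using $d/q(W)+\log(1/\delta)\le (d+\log(1/\delta))/q(W)$ gives $m\ge n\,q(W)$. Conversely, $d/q(W)+\log(1/\delta)\ge d/q(W)$ gives $m\lesssim n\,q(W)\,(1+\log(1/\delta)/d)$. So $n_{\mathrm{eff}}=c\,n\,q(W)$ with a constant depending only on the ratio $\log(1/\delta)/d$, which is bounded in the regime $\log(1/\delta)\lesssim d$. This is the regime I would state explicitly.

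I expect the main obstacle to be interpretive rather than computational. The phrase ``improves by the factor'' only makes sense for the variance term, whereas the dominant complexity term is unaffected by $I_W$. I would therefore be careful to state that the improvement applies to the second summand of Theorem~\ref{thm:info-tradeoff} before absorption. The choice of constant $c$ in $n_{\mathrm{eff}}$ also needs to be pinned down in the $\log(1/\delta)\gg d$ regime. My first attempt would simply restrict to $\delta\ge e^{-d}$ and let $c$ be an absolute constant there.
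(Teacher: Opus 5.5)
Your proposal is correct and follows the paper's own route: specialize Theorem~\ref{thm:info-tradeoff} to the indicator quantizer $M=2$, then bound the residual variance by $H(Y\mid X)-I_W$ via Lemma~\ref{lem:quantize} (with equality here, as you note). Your explicit absorption of the variance term using $H(Y\mid X)\le\ln 2$ and $d/q(W)\ge 1$ fills in a step the sketch leaves implicit, and the paper reads $n_{\mathrm{eff}}=n\,q(W)$ directly off the complexity term $d/(n\,q(W))$, keeping $\log(1/\delta)$ over $n$ as in Theorem~\ref{thm:expedition}, so your restriction to $\delta\ge e^{-d}$ is needed only under your stricter definition of $n_{\mathrm{eff}}$.
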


\begin{proof}[Proof sketch (Theorems~\ref{thm:info-tradeoff}
and~\ref{thm:neff})]
Lemma~\ref{lem:biasvar} splits the calibrated square-loss risk into
approximation plus residual variance. Lemma~\ref{lem:rad} bounds the
uniform deviations by $C\sqrt{d/n}$ for every $M$, since the lifted
class ignores the bin. The Massart calibration of
\citet{massart2006risk} converts variance into rate, and the $1/q(W)$
censoring factor enters through the lifted-class step, exactly as in
Theorem~\ref{thm:dcd-ub}. For Theorem~\ref{thm:neff}, take $M=2$. The
residual variance is then at most $H(Y\mid X)-I_W$ by
Lemma~\ref{lem:quantize}. Finer bins retain more information and
leave the complexity term unchanged, so in the oracle setting the
bound is monotone in $M$; the tradeoff that finer bins do create is
in calibration (Remark~\ref{rem:plugin}).
\end{proof}

\begin{remark}[Estimated pseudo-labels and the cost of bins]
\label{rem:plugin}
The guarantees above assume the calibrated posterior
$\hat Y(x,q)=\E[Y\mid X{=}x,Q_M{=}q]$ (Assumption~\ref{ass:oracle}).
With an estimate $\tilde Y$ in
its place, the two square-loss risks differ by at most
$2\,\E|\tilde Y-\hat Y|$ for $h,\hat Y,\tilde Y\in[0,1]$, so every
bound in this section degrades additively by
$O(\E|\tilde Y-\hat Y|)$. This additive term carries the cost of
quantization: $M$ bin posteriors must be calibrated from the
same $n$ samples, so $\E|\tilde Y-\hat Y|$ generically grows with
$M$, and an information versus calibration tradeoff appears in the
estimated setting even though the oracle bound of
Theorem~\ref{thm:info-tradeoff} is monotone. A bin-indexed class
$\{(x,q)\mapsto h_q(x)\}$ would reinstate a complexity term of order
$dM$; we do not take that route, since no bin is available at
prediction time. Estimating $\hat Y$ requires a model of the
delay channel, because $Y$ is unobserved exactly when $A=0$.
Identifiability conditions and rates for this calibration step are
open (Section~\ref{sec:discussion}).
\end{remark}

\subsection{Active Expedition}
\label{sec:expedition}

The learner may \emph{expedite} the labels of up to $B$ of its $n$
examples. Expediting an example reveals its label immediately, so the
learner observes $Z^{\exp}=(X,1,Y)$ for it. We call $B$ the query
budget.

\begin{theorem}[Effective sample size with random expedition]
\label{thm:expedition}
Let the $B$ expedited examples be drawn uniformly at random from the
$n$, independently of the observations. Under
Assumption~\ref{ass:oracle} there is a learner achieving
\[
R(\hat h)-\inf_{h\in\cH}R(h)\le C\sqrt{\tfrac{d}{n q(W)+B}+\tfrac{\log(1/\delta)}{n}}.
\]
\end{theorem}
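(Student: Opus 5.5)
The plan is to reduce Theorem~\ref{thm:expedition} to the machinery behind Theorems~\ref{thm:dcd-ub} and~\ref{thm:neff} by treating expedited examples as a second observation channel with coverage one. Split the sample into the set $E$ of $B$ expedited indices and its complement. Because $E$ is drawn uniformly and independently of $(X_i,Y_i,D_i)$, each index is expedited with probability $B/n$, independently of the data. The modified observation for a single example is therefore $Z'=(X,A',Y A')$ with $A'=\max\{A,\mathbf{1}\{i\in E\}\}$. Conditionally on $|E|=B$, the examples remain exchangeable. The in-window indicator of this channel has mean
\[
q'=q(W)+\bigl(1-q(W)\bigr)\tfrac{B}{n}\;\ge\;\tfrac12\Bigl(q(W)+\tfrac{B}{n}\Bigr),
\]
which satisfies $n q'\asymp n q(W)+B$. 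Independence of $E$ from $(Y,D)$ keeps the censored branch conditionally distributed as in the original model. The oracle posterior $\E[Y\mid X,A']$ is thus well defined and satisfies Assumption~\ref{ass:oracle}. In particular, the pseudo-label on non-expedited censored points coincides with the original $\hat Y(x,0)$.

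The learner runs pseudo-label ERM on the lifted class $\cH'$ with pseudo-labels $Y$ on $\{A'=1\}$ and $\hat Y(X,0)$ on $\{A'=0\}$. We then repeat the proof of Theorem~\ref{thm:dcd-ub} with $q(W)$ replaced by $q'$. Lemma~\ref{lem:biasvar} still splits the square-loss risk, and the residual variance can only decrease, since $A'$ refines the information available. Lemma~\ref{lem:rad} still gives a complexity term $O(\sqrt{d/n})$ independent of the channel. The $1/q$ censoring factor of the lifted-class step becomes $1/q'$. This yields the term $d/(nq')\asymp d/(nq(W)+B)$. The confidence term comes from a standard bounded-difference concentration over all $n$ samples and remains $\log(1/\delta)/n$.

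The main obstacle is that the sketch of Theorem~\ref{thm:dcd-ub} does not isolate exactly where $1/q(W)$ enters. I would make this explicit by proving a localized uniform-convergence bound restricted to the labeled subsample. With $N=|\{A'=1\}|$, Bernstein's inequality gives $N\ge nq'/2$ with probability $1-\delta/2$, provided $nq'\gtrsim\log(1/\delta)$. Otherwise the bound is trivial. On the labeled subsample, VC uniform convergence gives deviation $\sqrt{d/N}$. The censored points contribute deterministic pseudo-labels whose deviation is dominated by the same rate. A union over the two events then completes the proof.

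The one subtle point is that $E$ is a fixed-size sample drawn without replacement rather than an i.i.d.\ Bernoulli mask. I would handle this either by the usual Poissonization comparison, or by observing that hypergeometric counts concentrate at least as well as binomial ones (Hoeffding's comparison). Either route preserves the constant $C$ up to a factor.
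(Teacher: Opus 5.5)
Your reduction is the paper's effective-sample-size accounting in a different form. The paper keeps two streams: the $B$ expedited points are treated as i.i.d.\ fully labeled draws, the passive stream is credited with $n\,q(W)$ via Theorem~\ref{thm:neff}, and Bernstein-type concentration is applied to their union. You instead merge them into one channel $A'$ with $nq'=(n-B)q(W)+B\asymp nq(W)+B$. Your check that $\E[Y\mid X,A'=0]=\hat Y(X,0)$ is a clean justification for reusing the oracle.

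The step you add to make the $1/q$ factor explicit, however, does not work as written, for three reasons.
\begin{itemize}
\item The fallback ``otherwise the bound is trivial'' is false. When $d\ll nq'\ll\log(1/\delta)\ll n$, Bernstein gives no control of $N$ at level $\delta$. Yet the right-hand side $C\sqrt{d/(nq(W)+B)+\log(1/\delta)/n}$ is arbitrarily small there, so no absolute $C$ covers this regime.
\item A VC bound at level $\delta$ on the labeled subsample alone carries $\log(1/\delta)/N\asymp\log(1/\delta)/(nq')$. That is not the $\log(1/\delta)/n$ you then claim from a bounded-difference argument over all $n$ points, so the two pieces do not fit together.
\item The labeled subsample is selection-biased: in-window examples are class-skewed when $D$ depends on $Y$. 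Uniform convergence on it therefore controls $\E[(h(X)-Y)^2\mid A'=1]$, not the population risk. It must be recombined with weight $N/n$.
\end{itemize}

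The repair makes the detour unnecessary. With the weights in place, the labeled part contributes at most $\tfrac{N}{n}\sqrt{(d+\log(1/\delta))/N}\le\sqrt{(d+\log(1/\delta))/n}$, and no lower bound on $N$ is needed.

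More directly, for $h(X),Y\in\{0,1\}$ one has $(h(X)-Y)^2=\mathbf{1}\{h(X)\ne Y\}$. The tower property then gives $\E[(h(X_i)-\tilde Y_i)^2\mid E]=R(h)-c_i$, with $c_i\ge 0$ independent of $h$ (and $c_i=0$ for $i\in E$). Conditionally on $E$, which is independent of the data, the $n$ pseudo-labeled examples are independent with losses in $[0,1]$. The loss equals $h(x)(1-2\tilde y)+\tilde y^2$, whose second term does not depend on $h$, so its Rademacher complexity is at most that of $\cH$. Symmetrization and McDiarmid then give excess risk $O\bigl(\sqrt{(d+\log(1/\delta))/n}\bigr)$. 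This implies the stated bound because $nq(W)+B\le 2n$.

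Conditioning on $E$ also removes the need for your Poissonization or hypergeometric comparison. Finally, the difficulty you flagged in the sketch of Theorem~\ref{thm:dcd-ub} is not a real obstacle. Under Assumption~\ref{ass:oracle} no $1/q(W)$ factor is forced anywhere, and the statement is simply a weakening of the whole-sample rate.
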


\begin{proof}[Proof sketch]
A uniform subsample of an i.i.d.\ sample is i.i.d., so the expedited
stream consists of $B$ fully labeled draws from the original input
distribution, and the passive stream contributes effective sample
size $n\,q(W)$ by Theorem~\ref{thm:neff}. Bernstein-type
concentration on the union of the two streams
(\citealt{devroye1996probabilistic}, Ch.~12) gives the
$\sqrt{d/(n\,q(W)+B)}$ term.
\end{proof}

For the policy, let $V\subseteq\cH$ denote the version space after
the passive draws, the set of hypotheses consistent with every label
revealed by the passive sample. Fix a reference probability measure
$\mu$ on $\cH$ with $\mu(V)>0$; for finite $\cH$ one may take the
uniform distribution, and the experiments use a pilot posterior
(Remark~\ref{rem:surrogate}). We write $h\sim\mu_V$ for a draw from
the restriction of $\mu$ to $V$.

\begin{theorem}[Threshold policy]
\label{thm:policy}
Let $\Delta(x)=\Prob_{h\sim\mu_V}(h(x)=1)\,\Prob_{h\sim\mu_V}(h(x)=0)$
be the disagreement at $x$ under $\mu_V$. Consider choosing a batch
of at most $B$ expedition points, fixed before any expedited label is
revealed, to maximize the expected reduction of the $\mu$-mass of the
version space. The objective is monotone and submodular, greedy
selection achieves a $(1-1/e)$ approximation to the best fixed batch,
and the greedy marginal gain factors as
$\Delta(x)\cdot\Prob(D>W\mid x)$. The threshold rule
$\pi^\star(x)=\mathbf{1}\{\Delta(x)\Prob(D>W\mid x)\ge\tau\}$, with
$\tau$ chosen to spend the budget $B$, implements the greedy batch.
\end{theorem}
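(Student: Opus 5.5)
We will set up the objective explicitly and then verify the three claims in order: monotone submodularity, the greedy guarantee, and the factorization of the marginal gain.

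\textbf{The objective.} For a batch $S$ of candidate points, expediting $x\in S$ reveals $Y$ only when the label would otherwise have been censored. When $D\le W$ the label arrives anyway, so expedition adds nothing beyond the passive stream. A revealed label $y$ at $x$ eliminates every $h\in V$ with $h(x)\ne y$. We will model the label at $x$ as drawn from the version-space predictive, i.e.\ $y=h^\ast(x)$ with $h^\ast\sim\mu_V$, as is standard in version-space (GBS-type) analyses. Define
\[
F(S)=\E\bigl[\mu_V(\{h\in V:\exists x\in S,\ A_x=0,\ h(x)\ne h^\ast(x)\})\bigr],
\]
where the expectation runs over $h^\ast\sim\mu_V$ and the independent censoring indicators $A_x$. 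For a fixed realization, $F$ is the $\mu_V$-measure of a union of sets $E_x=\{h:h(x)\ne h^\ast(x)\}$ indexed by the uncensored-by-expedition points. Coverage functions are monotone and submodular. These properties survive nonnegative mixtures, so $F$ is monotone submodular. Since $F(\emptyset)=0$, the Nemhauser--Wolsey--Fisher theorem gives that greedy attains $(1-1/e)\max_{|S|\le B}F(S)$ for fixed (non-adaptive) batches.

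\textbf{The marginal gain.} For a singleton, conditioning on $h^\ast(x)=1$ (probability $p(x)=\Prob_{\mu_V}(h(x)=1)$) removes mass $1-p(x)$, and conditioning on $h^\ast(x)=0$ removes mass $p(x)$. Hence
\[
F(\{x\})=\Prob(D>W\mid x)\cdot 2p(x)(1-p(x))=2\,\Delta(x)\,\Prob(D>W\mid x).
\]
The constant $2$ is irrelevant to ranking. To pass from the first greedy step to the whole batch, we need the gain of $x$ given $S$ to reduce to the same product. We intend to argue this in the regime the theorem addresses, namely when the batch is fixed before any label is revealed and the score is evaluated under $\mu_V$ (not a posterior updated by hypothetical labels). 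The greedy rule is then evaluated on the current $\mu_V$. Under this convention, selecting the top-$B$ points by $\Delta(x)\Prob(D>W\mid x)$ coincides with greedy. The threshold $\tau$ is the $B$-th largest score, which gives $\pi^\star$. Ties are broken arbitrarily, and the budget is met by taking $\tau$ as that order statistic.

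\textbf{Main obstacle.} The main obstacle is this last step. Under the exact objective $F$, the conditional marginal gain $F(S\cup\{x\})-F(S)$ is $2\Delta(x)\Prob(D>W\mid x)$ minus an overlap term coming from hypotheses already removed by $S$. Dropping that term makes the modular top-$B$ rule only an upper-bound surrogate. I would first try to show that the overlap term is itself of product form, $\Prob(D>W\mid x)\cdot g_S(x)$, because censoring at $x$ is independent of everything else. This would make the greedy gain always proportional to $\Prob(D>W\mid x)$ times a version-space quantity.

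If that fails, there are two fallbacks. The first is to restrict the claim to the modular objective $\sum_{x\in S}\Delta(x)\Prob(D>W\mid x)$, i.e.\ the sum of first-step gains, which lower-bounds nothing but upper-bounds $F$. For a modular objective, top-$B$ is exactly optimal, and the $(1-1/e)$ statement for $F$ comes from the greedy analysis above. The second is to state that the threshold rule implements the first greedy step and the lazy-greedy ordering at initialization. I would present the first route and flag the factorization as holding for the initial (equivalently, non-adaptive, unconditioned) marginal gains.
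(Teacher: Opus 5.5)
Your monotonicity, submodularity and Nemhauser--Wolsey--Fisher steps are sound and follow the paper's skeleton. The coverage-mixture argument is actually a cleaner justification than the paper's appeal to the chain rule for information gain, since the objective is mass reduction. Your singleton value $F(\{x\})=2\Delta(x)\Prob(D>W\mid x)$ also matches the claimed factorization up to the factor $2$.

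The genuine gap is the step you flagged, and it cannot be closed for $F$ as defined. With censoring independent across points,
\[
F(S\cup\{x\})-F(S)=\Prob(D>W\mid x)\,\E\Bigl[\mu_V\Bigl(E_x\setminus\textstyle\bigcup_{s\in S,\,A_s=0}E_s\Bigr)\Bigr].
\]
So the product form you hoped for does hold. However, the second factor depends on $S$ differently for different $x$, and greedy re-ranks after every pick.

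For example, let $\mu_V$ be uniform on $V=\{h_1,\dots,h_4\}$ and take $B=2$. Let $x_1,x_2$ both split $\{h_1,h_2\}$ from $\{h_3,h_4\}$, with censoring probability $1$. Let $x_3$ split $\{h_1,h_3\}$ from $\{h_2,h_4\}$, with censoring probability $0.9$. The threshold rule selects $\{x_1,x_2\}$ with $F=1/2$. Greedy selects $\{x_1,x_3\}$ with $F=1/2+0.9/4=0.725$.

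A worse family: take $B$ copies of a point that isolates one hypothesis of small mass $m$. Add $B-1$ points isolating other mass-$m$ hypotheses with slightly smaller censoring probability, and put the remaining mass on one further hypothesis. Then the threshold set has $F\approx 2m$, while the set of distinct points has $F\approx 2Bm$. So the threshold rule carries no $(1-1/e)$ guarantee in general.

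This is also why your first fallback fails. Top-$B$ is optimal for the modular surrogate, but the $(1-1/e)$ bound certifies the sequential greedy set for $F$. That is a different set, and nothing transfers to the top-$B$ set. The paper's sketch does not resolve this either: it simply asserts that the greedy marginal gain factors and calls the threshold its Lagrangian dual.

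A correct version needs structure that turns the $S$-dependence into a factor common to all $x$. Suppose, for instance, that the labels at distinct candidate points are independent under $\mu_V$. Then $F(S)=1-\prod_{s\in S}\bigl(1-2\Delta(s)\Prob(D>W\mid s)\bigr)$. The gain of $x$ given $S$ is $2\Delta(x)\Prob(D>W\mid x)\prod_{s\in S}\bigl(1-2\Delta(s)\Prob(D>W\mid s)\bigr)$, so the threshold rule is exactly greedy, and indeed optimal.

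Without such an assumption, the defensible claims are weaker. One is your second fallback: the threshold rule reproduces the first greedy step. The other is exact optimality for the modular surrogate, with no $(1-1/e)$ statement for $F$.

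Independence of $A_x$ from $h^\ast(x)$ is also a substantive assumption here. Write $c_y(x)=\Prob(D>W\mid x,Y{=}y)$ and $p(x)=\Prob_{h\sim\mu_V}(h(x)=1)$. The singleton gain is then $\Delta(x)\,[c_0(x)+c_1(x)]$. This equals $2\Delta(x)\Prob(D>W\mid x)$ only when $c_0(x)=c_1(x)$ or $p(x)=\tfrac12$. So in the informative-delay regime the paper cares about, even the first-step factorization needs this caveat.
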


\begin{proof}[Proof sketch]
The expected-mass-reduction objective is monotone, since queries only
exclude hypotheses. It is submodular over fixed batches, since the
conditional information gain of a query is non-increasing in the
already-selected set by the chain rule. Greedy maximization under a
cardinality budget therefore achieves $(1-1/e)$ of the best fixed
batch \citep{nemhauser1978analysis}. \citet{shaeiri2026pac} use
analogous arguments. The greedy marginal gain factors as
$\Delta(x)\cdot\Prob(D>W\mid x)$, and the threshold form is its
Lagrangian dual at budget $B$. Details appear in the full version.
\end{proof}

\begin{remark}[Scope of the two guarantees]
\label{rem:scope}
Theorems~\ref{thm:expedition} and~\ref{thm:policy} answer different
questions: sample-size accounting for i.i.d.\ expedited draws, and
an approximation guarantee for the version-space objective with a
batch fixed in advance. The threshold policy's picks concentrate
where $\Delta(x)\Prob(D>W\mid x)$ is large, so they are neither
distributed as $P_X$ nor, in sequential variants that update $V$
between picks, independent, and Theorem~\ref{thm:expedition} does
not apply to them as stated. Extending it requires importance
weighting for the induced covariate shift and martingale
concentration for adaptive selection, and converting
version-space-mass reduction into excess risk requires additional
structure. Both extensions are open (Section~\ref{sec:discussion}).
Experiment~3 evaluates the threshold policy directly.
\end{remark}

\begin{remark}[Practical surrogate]
\label{rem:surrogate}
Exact version-space maintenance is intractable in high dimension, and
the experiments estimate $\Delta(x)$ with a pilot model
(Section~\ref{sec:experiments}). The greedy induction tolerates this.
If every step selects a point whose marginal gain is within $\zeta$
of the best available, the batch attains $(1-1/e)$ of the optimal
fixed-batch value minus a term of order $B\zeta$, and a
$\zeta$-accurate pilot inherits Theorem~\ref{thm:policy} up to that
term.
\end{remark}

\subsection{Structural Phenomena and a Phase Diagram}
\label{sec:phenomena}

\begin{figure*}[!t]
  \centering
  \subfloat[]{\includegraphics[height=2.4cm]{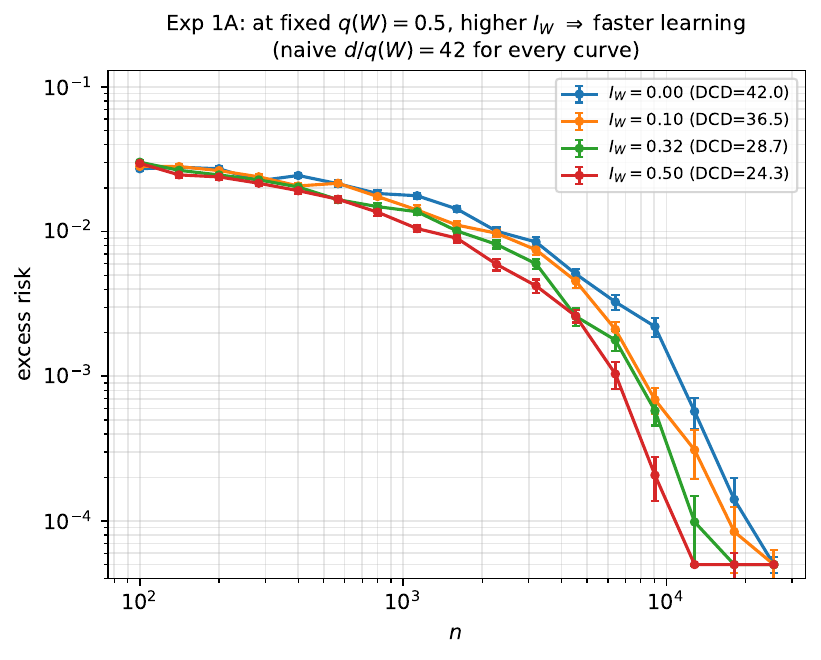}\label{fig:exp1}}\hfill
  \subfloat[]{\includegraphics[height=2.4cm]{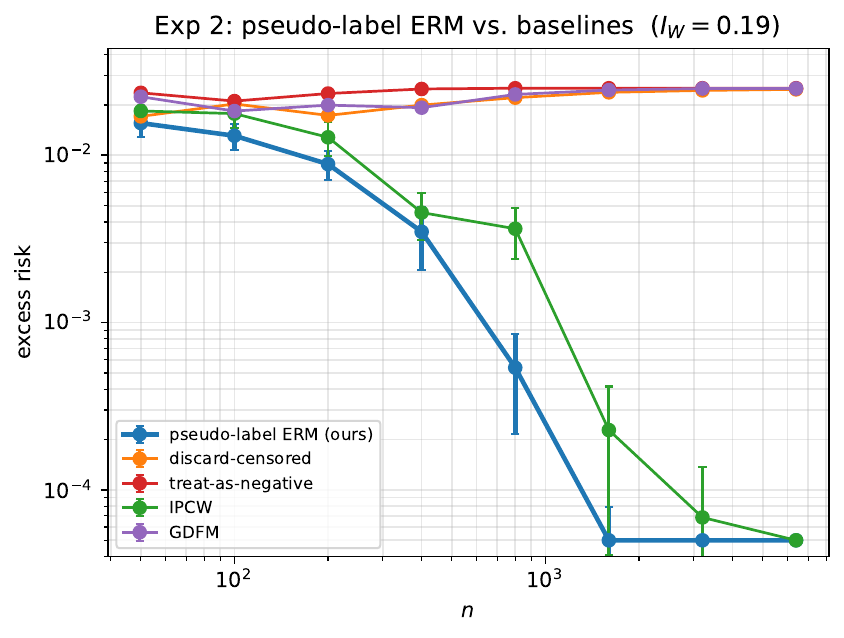}\label{fig:exp2}}\hfill
  \subfloat[]{\includegraphics[height=2.4cm]{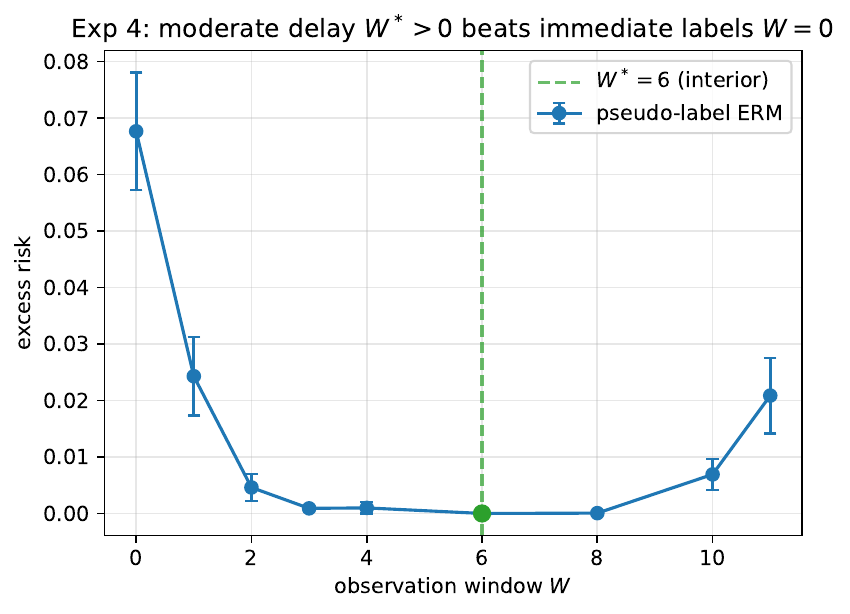}\label{fig:exp4}}\hfill
  \subfloat[]{\includegraphics[height=2.4cm]{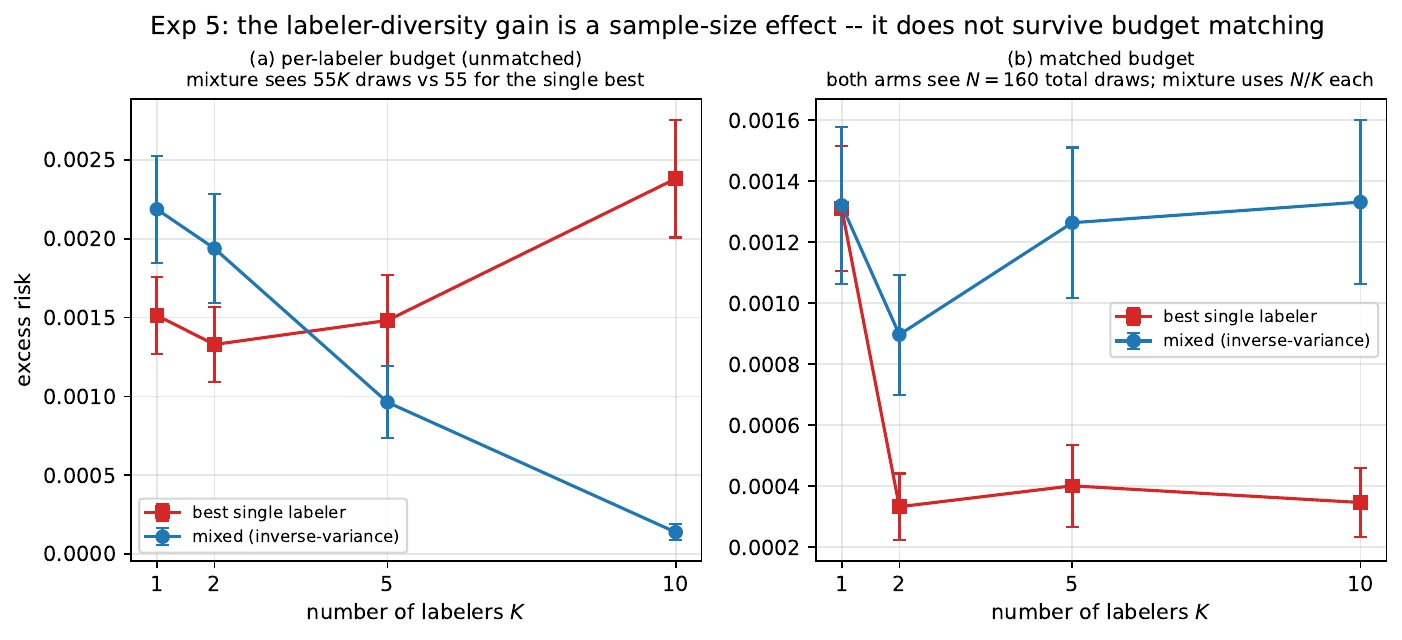}\label{fig:exp5}}\\[4pt]
  \subfloat[]{\includegraphics[height=2.4cm]{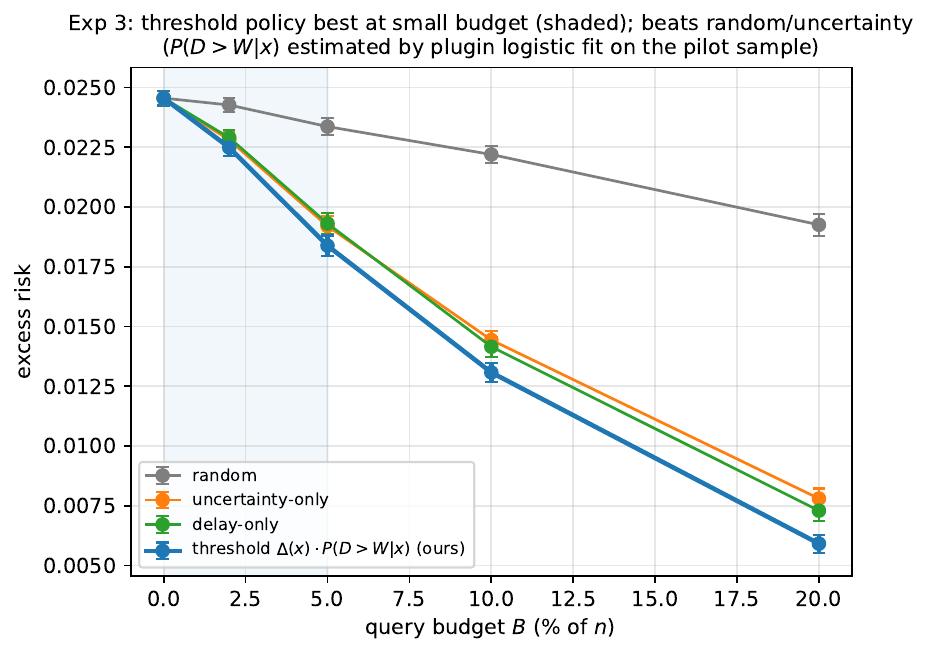}\label{fig:exp3}}\hfill
  \subfloat[]{\includegraphics[height=2.4cm]{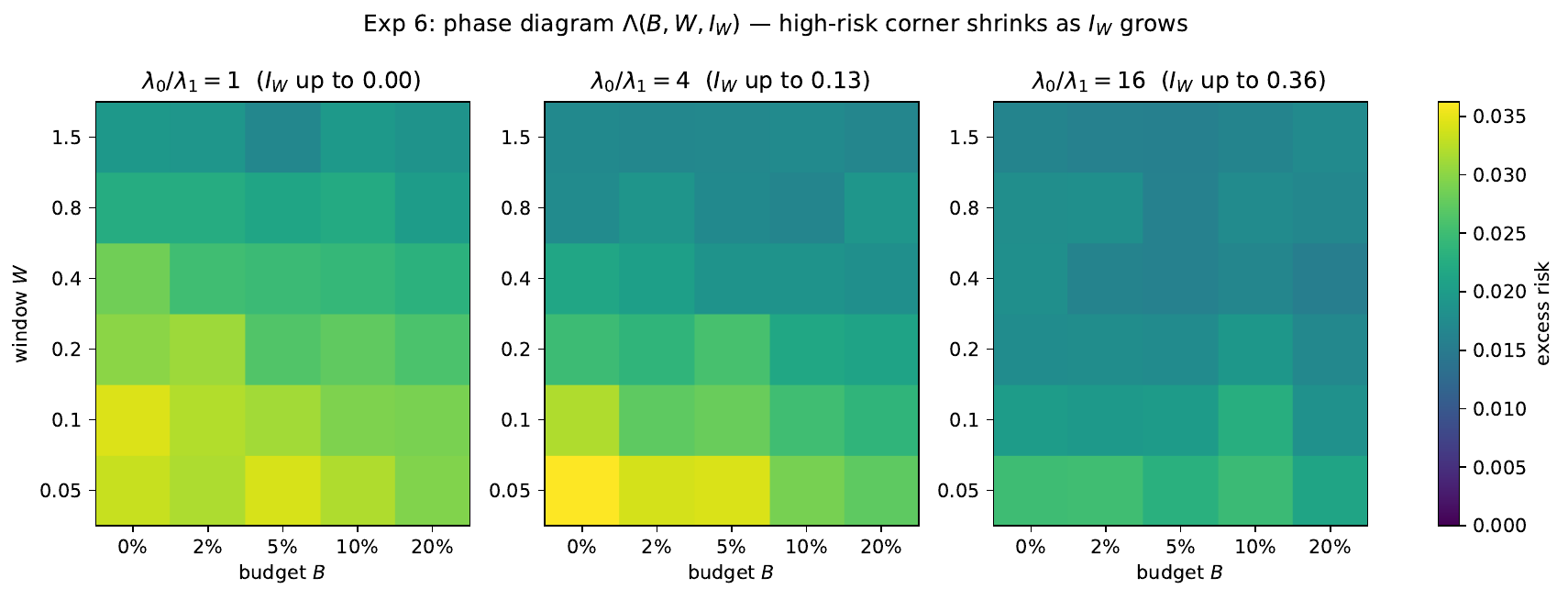}\label{fig:exp6}}\hfill
  \subfloat[]{\includegraphics[height=2.4cm]{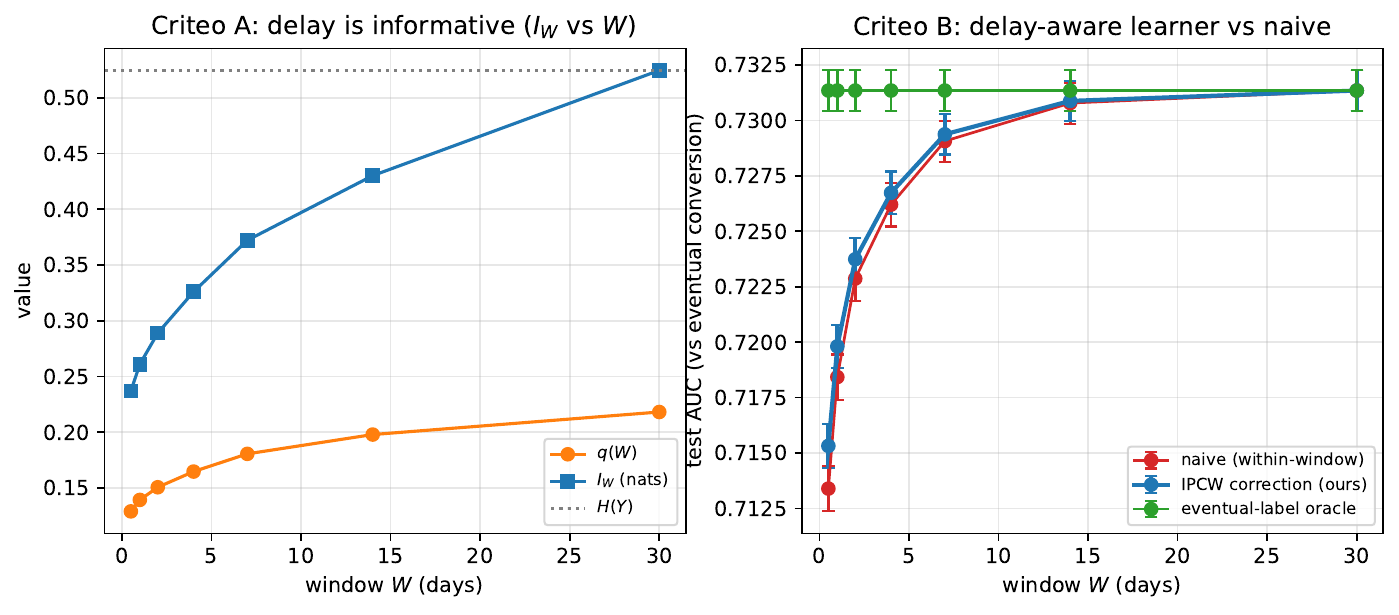}\label{fig:criteo}}
  \caption{\textbf{Experimental results.}
  (a)~Fixed coverage $q(W)=0.50$: the baseline $d/q(W)=42$ is constant and
  higher $I_W$ (smaller DCD) learns faster
  (Theorems~\ref{thm:dcd-lb},~\ref{thm:dcd-ub}).
  (b)~Censoring-aware methods are consistent; the others plateau.
  (c)~An interior optimum $W^*>0$ beats $W=0$
  (Theorem~\ref{thm:nonzero-delay-optimal}).
  (d)~Left: per-labeler budget, the mixture wins on $K$ times the data.
  Right: matched budget, the effect reverses (Theorem~\ref{thm:diversity}).
  (e)~Plug-in censoring estimate: the threshold policy is best at every
  budget (Theorems~\ref{thm:expedition},~\ref{thm:policy}).
  (f)~The starved corner shrinks as $I_W$ grows (Theorem~\ref{thm:phase}).
  (g)~Criteo: $I_W$ approaches $H(Y)$ (left); the delay-aware correction
  helps (right).}
  \label{fig:experiments}
\end{figure*}

\textbf{Nonzero-delay-optimal regime.}

We show that under a collection horizon $T$, there are regimes where
$W^\star\in(0,T)$ strictly outperforms $W=0$.

\begin{assumption}[Maturation noise with horizon]
\label{ass:maturation}
Sample size scales as $n(W)=n(1-W/T)$ for $W<T$. The label observed
at delay $W$ has Bernoulli flip rate
$\nu(W)=\nu_0 e^{-\beta W}$ with $\beta>0$,
$\nu_0\in(0,\tfrac12)$.
\end{assumption}

\begin{theorem}[Nonzero delay strictly improves]
\label{thm:nonzero-delay-optimal}
Under Assumption~\ref{ass:maturation}, the excess risk $\Phi(W)$ of
pseudo-label ERM trained at window $W$ satisfies
$\Phi(W)\le C_1\nu_0^2 e^{-2\beta W}+C_2 d/[n(1-W/T)]$. There is an
open subset of $(\beta,\nu_0,n,T,d)$ for which the minimizer
$W^\star$ lies in $(0,T)$ and $\Phi(W^\star)<\Phi(0)$.
\end{theorem}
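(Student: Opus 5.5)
The proof has two parts: an upper bound on $\Phi(W)$ by a noise-bias term plus an estimation term, then an elementary calculus analysis of that bound.

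\textbf{Step 1: the bound.} Fix $W<T$. The learner sees $n(W)=n(1-W/T)$ samples whose labels are flipped independently with rate $\nu(W)$. I would decompose the excess risk of pseudo-label ERM with respect to the clean risk $R$ into two pieces.
\begin{itemize}
\item The \emph{bias} from training against the flipped-label distribution. Under the calibrated square-loss surrogate of Lemma~\ref{lem:biasvar}, the regression target is shifted from $\eta(x)$ to $(1-2\nu)\eta(x)+\nu$. The squared discrepancy of the pseudo-label target is then $O(\nu^2)$. Through the Massart-noise calibration already invoked in Theorem~\ref{thm:dcd-ub}, this gives a surrogate-to-risk contribution of at most $C_1\nu(W)^2=C_1\nu_0^2e^{-2\beta W}$.
\item The \emph{estimation} term. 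Under the Massart margin condition (with the realizable reference case), the localized VC bound gives the fast rate $C_2 d/n(W)$ rather than $\sqrt{d/n(W)}$. Here I use Lemma~\ref{lem:rad} for the class complexity and absorb logarithmic factors into $C_2$.
\end{itemize}
Adding the two pieces yields the stated inequality. This is the step I expect to be delicate. The slow-rate analysis of Theorem~\ref{thm:dcd-ub} would give $\sqrt{d/n}$ and a linear $\nu$ bias, so the squared form must be justified through the Massart/Tsybakov fast-rate regime. If that fails, I would run the same argument below with exponents $(1,\tfrac12)$; the qualitative conclusion is unchanged.

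\textbf{Step 2: the optimum.} Write the bound as $\Psi(W)=C_1\nu_0^2e^{-2\beta W}+C_2d/[n(1-W/T)]$ and treat it as a surrogate for $\Phi$. Its derivative at zero is
\[
\Psi'(0)=-2\beta C_1\nu_0^2+\frac{C_2d}{nT}.
\]
This is negative whenever $2\beta C_1\nu_0^2nT>C_2d$. The function $\Psi$ is continuous on $[0,T)$ and tends to $+\infty$ as $W\to T$. Hence it attains its minimum at some $W^\star\in(0,T)$, and $\Psi(W^\star)<\Psi(0)$. The defining strict inequality is open in $(\beta,\nu_0,n,T,d)$, with $d$ treated as a continuous parameter, and it is nonempty: take $n$ large with the other parameters fixed.

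\textbf{Step 3: from the bound to $\Phi$ itself.} The bound only controls $\Phi$ from above, so to compare $\Phi(W^\star)$ with $\Phi(0)$ I need a matching lower bound at $W=0$. I would show $\Phi(0)\ge c_1\nu_0^2+c_2d/n$, either by a two-point argument as in Theorem~\ref{thm:dcd-lb} on the class-conditional construction, or by restricting to an instance where the bias term is attained with equality. For instance, take a one-sided flip, where ERM converges to a hypothesis misaligned by order $\nu_0^2$ in risk.

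On the subset where $\Psi(W^\star)<c_1\nu_0^2+c_2d/n\le\Phi(0)$, we then get
\[
\Phi(W^\star)\le\Psi(W^\star)<\Phi(0).
\]
This subset is again open and nonempty. To see this, take $\beta$ large so that the bias term at $W^\star$ becomes negligible, and take $n$ large so that the estimation term becomes negligible; then $\Psi(W^\star)\to 0$ while $c_1\nu_0^2$ stays fixed. This sandwich step is the main obstacle I expect, since it needs instance-specific tightness of the bias term at $W=0$.
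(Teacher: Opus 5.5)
Your Steps 1--2 follow the paper's argument. The paper's sketch also splits the bound into a squared flip-rate bias and a sample-size term, derives the same slope condition $2\beta C_1\nu_0^2>C_2d/(Tn)$ at $W=0$, and uses the blow-up as $W\to T^-$. The paper stops there, though, and applies both facts to $\Phi$ itself. Only the bound has them: an excess risk never exceeds $1$, so $\Phi\not\to\infty$. The paper also attributes the $d/[n(1-W/T)]$ term to Theorem~\ref{thm:dcd-ub}, which gives a square-root rate. Your Massart fast-rate justification and your Step~3 address exactly these two weak points. Once Step~3 is completed for a suitable instance, your plan proves more than the paper's sketch: it also shows that $0$ cannot minimize $\Phi$.

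The weak link is your choice of tool in Step~3. The two-point route cannot produce the $n$-independent floor you need. The flip rate $\nu_0$ is known, so a noise-corrected learner is consistent. Any minimax lower bound at $W=0$ therefore vanishes as $n\to\infty$, which is exactly the limit you take to make $\Psi(W^\star)$ small.

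The floor is a property of the noise-unaware ERM, so it must come from an explicit instance. It exists only under a class-conditional reading of Assumption~\ref{ass:maturation}. Write $p(x)=\Prob(Y{=}1\mid X{=}x)$.
\begin{itemize}
\item \textbf{Symmetric flips.} The noisy risk is $\tilde R(h)=\nu+(1-2\nu)R(h)$, so ERM at $W=0$ targets the clean minimizer and $\Phi(0)\to 0$. No bias floor exists. You would have to win through the $(1-2\nu_0)$ shrinkage of the noisy margin in the estimation term, and the displayed bound hides that by taking $C_2$ uniform in $\nu$.
\item \textbf{One-sided flips} (late positives recorded as negatives). The target becomes $(1-\nu)p(x)$. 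Under a clean Massart margin $h$, no decision shifts unless $\nu_0>2h/(1+2h)$, so choose the instance that way.
\end{itemize}

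In the one-sided instance, the limiting ERM at $W=0$ misclassifies $\{\tfrac12+h\le p<\tfrac{1}{2(1-\nu_0)}\}$. This gives a floor of at least $2h$ times that set's mass, independent of $n$ and $\beta$. For large $\beta$, the noisy problem at $W^\star$ keeps margin $h-\nu(W^\star)(\tfrac12+h)>0$, so your fast-rate Step~1 applies at $W^\star$, which is the only place Step~3 uses it. With that commitment the sandwich closes. Drop the two-point option.
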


\begin{proof}[Proof sketch]
The bias term is the squared-loss propagation of the flip rate. The
variance term is the censored sample size from
Theorem~\ref{thm:dcd-ub}. The condition
$\Phi'(0)<0$ requires $2\beta C_1\nu_0^2>C_2 d/(Tn)$, which holds for
large enough $\beta$, $\nu_0$, and $n$. Since $\Phi(W)\to\infty$ as
$W\to T^-$, the minimizer is then interior.
\end{proof}

\textbf{Diversity across labelers.}

Suppose there are $K$ independent labelers with delay rates
$\lambda_1,\ldots,\lambda_K$, and labeler $k$ has effective coverage
$\eta_k$. Here $K$ counts labelers, and it is unrelated to the bin
count $M$ of Section~\ref{sec:informative}. Write
$\bar\eta=\frac1K\sum_k\eta_k$ and $\eta_{\max}=\max_k\eta_k$.

\begin{theorem}[Diversity benefit under per-labeler normalization]
\label{thm:diversity}
Suppose each labeler contributes $n$ draws, so that
$\hat h_{\mathrm{mix}}$, which combines the $K$ labelers'
pseudo-label estimates with inverse-variance weights
$w_k\propto\eta_k$, trains on $nK$ examples, while
$\hat h_{\mathrm{best}}$, which uses only the single best labeler,
trains on $n$. Then
$\E[R(\hat h_{\mathrm{mix}})]\le
\E[R(\hat h_{\mathrm{best}})]\cdot\eta_{\max}/(\bar\eta(1+H_e(\{w_k\})))$,
where $H_e$ is the entropy of the weights.
\end{theorem}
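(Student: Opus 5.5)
The argument treats the $K$ labelers as independent channels, each carrying label information at rate $\eta_k$ per draw, and compares the two learners through their variance terms. The analysis of Theorem~\ref{thm:dcd-ub} and Theorem~\ref{thm:neff} supplies the rates. I will take the leading term of the expected excess risk of oracle pseudo-label ERM trained on $m$ draws from a channel with coverage $\eta$ to be of order $V/(m\eta)$, with $V$ a common complexity/variance constant (the squared-rate form from the Massart calibration step). This is a modelling step, not a consequence of the high-probability bounds; it replaces $q(W)$ by $\eta$, in the spirit of the conjectured rate of Remark~\ref{rem:tightness}. Under it, $\E[R(\hat h_{\mathrm{best}})]\asymp V/(n\eta_{\max})$.

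\emph{Step 1: variance of the mixture.} Since the labelers are independent, their pseudo-label estimates are independent. Combining them with weights $w_k=\eta_k/\sum_j\eta_j$ is then the standard inverse-variance combination. The variance of the combined estimator is $\bigl(\sum_k n\eta_k/V\bigr)^{-1}=V/(nK\bar\eta)$. Thus the mixture behaves like a single channel of coverage $\bar\eta$ trained on $nK$ draws.

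\emph{Step 2: the entropy factor.} It remains to show $K\ge 1+H_e(\{w_k\})$, where $H_e$ is measured in nats. This follows because $H_e\le\ln K\le K-1$. Substituting into Step 1 gives
\[
\frac{V}{nK\bar\eta}\le\frac{V}{n\bar\eta\,(1+H_e)}=\frac{V}{n\eta_{\max}}\cdot\frac{\eta_{\max}}{\bar\eta(1+H_e)}.
\]
The right-hand side is the claimed bound after identifying $V/(n\eta_{\max})$ with $\E[R(\hat h_{\mathrm{best}})]$. The inequality $K\ge 1+\ln K$ loses a lot, so the stated bound is loose but valid.

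\emph{Main obstacle.} The hard step is justifying that the expected excess risk scales linearly in the inverse effective variance, with the same constant $V$ for both learners. The Rademacher term of Lemma~\ref{lem:rad} depends only on $d$ and sample size, so it is common to both learners. The residual variance term of Lemma~\ref{lem:biasvar} is where the weighted combination enters. I would first bound the weighted empirical square-loss deviation by Bernstein's inequality applied to the weighted sum; its variance proxy is $\sum_k w_k^2\,\mathrm{Var}_k/n$. I would then integrate the tail bound to pass from high probability to expectation. Any constant mismatch between the two learners would be absorbed into $C$, or into the slack from $K\ge 1+H_e$.
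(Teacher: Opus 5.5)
The arithmetic in your Steps~1 and~2 is fine, but the argument does not prove the statement. Your Step~1 coincides with the paper's sketch. The final translation differs: the paper converts \emph{standard deviations} into excess risk (the $\sqrt{\cdot}$ form of Theorems~\ref{thm:dcd-ub} and~\ref{thm:neff}) and gets $1+H_e(\{w_k\})$ from a Jensen correction, while you take risk linear in variance and use $1+H_e\le 1+\ln K\le K$.

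The main gap is that everything rests on the identification $\E[R(\hat h)]=V/(m\eta)$, with one constant $V$ shared by both learners, and the repair you outline cannot supply it. The claim is a ratio inequality $\E[R(\hat h_{\mathrm{mix}})]\le\kappa\,\E[R(\hat h_{\mathrm{best}})]$ with explicit factor $\kappa=\eta_{\max}/(\bar\eta(1+H_e))$. Bernstein's inequality plus tail integration gives only \emph{upper} bounds. You would also need a \emph{lower} bound $\E[R(\hat h_{\mathrm{best}})]\ge V/(n\eta_{\max})$ for this specific learner and distribution, with exactly the constant in the mixture's upper bound. Nothing in the paper provides that: Theorem~\ref{thm:dcd-lb} is a minimax sample-complexity bound over a constructed family, not a lower bound on this learner's expected risk. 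Constant mismatches cannot be absorbed either. The statement has no free constant $C$, and the slack $K/(1+H_e)$ is only $2/(1+\ln 2)\approx1.18$ at $K=2$ with equal coverages.

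Two of your modelling choices are load-bearing and unsupported. First, Step~2 closes only because risk is taken linear in variance. With the $\sqrt{\cdot}$ rates the paper actually proves, your ratio becomes $\sqrt{\eta_{\max}/(K\bar\eta)}$. For $K=2$ and $\eta_1=\eta_2$ this is $\sqrt{1/2}\approx0.71$, which exceeds $\kappa=1/(1+\ln 2)\approx0.59$. A fast rate $V/(m\eta)$ is proved nowhere in the paper. Replacing $q(W)$ by $\eta$, which the paper's sketch also does, is the open conjecture of Remark~\ref{rem:tightness}.

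Second, you model \emph{excess} risk and then identify it with $\E[R(\hat h)]$. Under the standing assumption $H(Y\mid X)>0$, the Bayes risk, and hence $\inf_{h\in\cH}R(h)$, is strictly positive. We have $\E[R(\hat h_{\mathrm{mix}})]\ge\inf_{h}R(h)$, while $\E[R(\hat h_{\mathrm{best}})]\to\inf_{h}R(h)$ for a consistent learner. So for large $n$ the inequality on total risk fails whenever $\kappa<1$, for example with equal coverages and $K\ge2$. Even granting your modelling assumption, the argument yields at most the excess-risk version of the statement.
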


\begin{proof}[Proof sketch]
Labeler $k$'s pseudo-label estimate has variance $O(1/(n\eta_k))$.
The inverse-variance mixture has variance
$O(1/(n\sum_k\eta_k))=O(1/(nK\bar\eta))$. The best single labeler has
variance $O(1/(n\eta_{\max}))$. Translating standard deviations to
excess risk and applying a Jensen correction for the effective number
of labelers yields the entropy factor $1+H_e(\{w_k\})$. The Jensen
step is carried out in the full version.
\end{proof}

The per-labeler normalization is part of the statement: the
mixture's gain comes from pooling $nK$ draws, and at a matched total
budget the comparison reverses empirically (Experiment~5, reported
as a negative result).

\textbf{Phase diagram.}

\begin{theorem}[Phase regimes]
\label{thm:phase}
Let $\Lambda=n\,q(W)+B$, $\Lambda_0=n\,\eta(W)+B$, $d=\mathrm{VC}(\cH)$.
(i) If $\Lambda_0=o(d)$, no consistent learner exists, and the
excess risk is $\Omega(1)$. (ii) If $\Lambda=\omega(d)$, the excess
risk is $O(\sqrt{d/\Lambda})=o(1)$; when $\Lambda=\Theta(n)$ this is
the $n^{-1/2}$ rate. (iii) If $\Lambda=\Omega(d\log n)$, the
local-Rademacher rate $\sqrt{d\log n/\Lambda}$ applies.
\end{theorem}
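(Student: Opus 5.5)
The three regimes follow by specializing the two bounds already proved, with one new ingredient for part (i): extending the lower bound of Theorem~\ref{thm:dcd-lb} to allow expedited labels. For part (i) I would reuse the Varshamov--Gilbert packing of that proof on a shattered set of size $d$. The label biases are now set to a constant level, say $\varepsilon=1/8$, so that distinct packing elements are separated by $\Omega(1)$ in excess risk. The per-sample KL for a passive observation is at most $C\varepsilon^2\eta(W)$, exactly as computed there. An expedited observation reveals $(X,Y)$ fully, so its KL is at most $C\varepsilon^2$, the $q=1$, $\eta=1$ case of the same computation. Summing over the $n-B$ passive and $B$ expedited draws, the joint KL is at most $C\varepsilon^2(n\eta(W)+B)\le C\varepsilon^2\Lambda_0$. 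Fano's inequality with $\log|\Sigma|\ge d/8\cdot\ln 2$ then shows the error probability of identifying $\sigma$ is bounded below by a constant whenever $C\Lambda_0/64\le c\,d$. That holds eventually when $\Lambda_0=o(d)$, so the expected excess risk stays $\Omega(1)$ and no learner is consistent.

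For part (ii), I would invoke Theorem~\ref{thm:expedition} directly. With $\Lambda=nq(W)+B$, its bound reads $C\sqrt{d/\Lambda+\log(1/\delta)/n}$. Taking $\delta=1/n$ (or fixed $\delta$) gives $O(\sqrt{d/\Lambda})$ up to the $\sqrt{\log(1/\delta)/n}$ term. That term is dominated because $\Lambda\le n+B\le 2n$, so $d/\Lambda\gtrsim 1/n$; absorbing logarithms as the paper does gives $o(1)$ when $\Lambda=\omega(d)$. When $\Lambda=\Theta(n)$, substituting yields $\sqrt{d/n}$, the $n^{-1/2}$ rate.

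For part (iii), I would rerun the argument of Theorem~\ref{thm:expedition} with local Rademacher complexity in place of the global bound of Lemma~\ref{lem:rad}. This uses the Massart-noise calibration already invoked in Theorem~\ref{thm:dcd-ub}. The resulting fixed point of the localized complexity is of order $d\log n/\Lambda$ for a VC class. The condition $\Lambda=\Omega(d\log n)$ is exactly what makes that fixed point at most a constant, so the localized bound is nontrivial.

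I expect the main obstacle to be part (i). The expedited points in Theorem~\ref{thm:expedition} are chosen uniformly at random, but the phase statement should cover adaptive expedition as well. For adaptive choices I would apply the chain rule for KL conditionally on the history. Each expedited query still contributes at most $C\varepsilon^2$ regardless of which point is chosen, so the sum bound goes through. A secondary issue is that $\varepsilon$ must be constant rather than vanishing, and the KL bound of Theorem~\ref{thm:dcd-lb} was stated only for $\varepsilon\le1/8$. Fixing $\varepsilon=1/8$ keeps us inside its range.
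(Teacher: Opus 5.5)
Your proposal is correct and follows the paper's route: part (i) comes from the Fano construction of Theorem~\ref{thm:dcd-lb} in the regime $\Lambda_0/d\to 0$, part (ii) from specializing Theorem~\ref{thm:expedition}, and part (iii) from the standard local-Rademacher refinement with $\Lambda$ in place of $n$. In (i) you also account explicitly for the $B$ expedited draws, each contributing at most $C\varepsilon^2$ so that the joint KL is at most $C\varepsilon^2\Lambda_0$; this fills in a step the paper's sketch leaves implicit, since Theorem~\ref{thm:dcd-lb} is stated only for passive observations.
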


\begin{proof}[Proof sketch]
Case (i) is Theorem~\ref{thm:dcd-lb} in the limit $\Lambda_0/d\to 0$.
The impossibility threshold is governed by $\eta(W)$, and the
achievable threshold by $q(W)$. Case (ii) is
Theorem~\ref{thm:expedition}, whose bound is $o(1)$ exactly when
$d/\Lambda\to 0$. Case (iii) is the standard
local-Rademacher refinement under bounded variance
(\citealt{devroye1996probabilistic}), with $\Lambda$ in place of $n$.
Experiment~6 probes all three regimes.
\end{proof}

\section{Experiments}
\label{sec:experiments}


We validate the theory empirically, one prediction per experiment.
Synthetic data is the primary venue, where $d$, $q(W)$, and $I_W$
can be varied independently and computed in closed form.
Throughout, $I_W$ denotes the marginal $I(Y;A)$; the
conditional counterpart is quantified below. Curves report means
over $10$--$200$ seeds ($60$--$200$ for the rerun Experiments 1, 3,
5, and 6) with $\pm1$ standard error; compared methods run on
identical draws, so paired per-seed differences are reported.


\textbf{Synthetic observation model.}
Inputs are $x\sim\mathrm{Unif}[0,1]^k$, labels follow
$\Prob(y=1\mid x)=\sigma(\alpha\langle w^*,\tilde x\rangle)$ with
logistic $\sigma$, $h^*$ is the best-in-class halfspace, and
$d=k+1$. The delay is label-dependent,
$D\mid y\sim\mathrm{Exp}(\lambda_y)$, so the indicator
$a=\mathbf 1\{D\le W\}$ is informative, with $I_W=0$ at
$\lambda_0=\lambda_1$ and both $q(W)$ and $I_W$ in closed form.
Experiment~1 uses $k=20$, $\alpha=0.7$, $W=1.0$, and
$\lambda_0/\lambda_1\in\{1,4,16,64\}$ with $q(W)=0.50$ pinned per
condition (below). Experiments~2 and~3 use $k=12$, $\alpha=0.8$,
$W=0.6$, $(\lambda_1,\lambda_0)=(1,6)$, giving $q(W)=0.71$.
Experiment~5 uses per-labeler rates $\lambda_k\in[0.5,4]$, and
Experiment~6 the Experiment~2 geometry with $W\in[0.05,1.5]$;
remaining settings are in the run records. Excess risk is measured
against $h^*$ on $2\times10^5$ shared Monte-Carlo test points;
log-scale figures clip at a $5\times10^{-5}$ floor. All learners
share a linear backend and the true base rate $p=\tfrac12$.
Pseudo-labels use the true delay parameters, the oracle setting of
Assumption~\ref{ass:oracle}; Experiments~3 and~6 estimate the
censoring probability from data.

\textbf{Sample complexity tracks the DCD (Experiment~1).}

Theorems~\ref{thm:dcd-lb} and~\ref{thm:dcd-ub} predict that reaching
excess risk $\varepsilon$ takes $n\sim\mathrm{DCD}/\varepsilon^2$
samples. The DCD and the $I_W$-independent baseline $d/q(W)$
separate only at fixed $q(W)$, and sweeping $\lambda_0/\lambda_1$ at
a fixed window moves both. We therefore pin the coverage, solving
$p\,e^{-\lambda_1 W}+(1-p)\,e^{-r\lambda_1 W}=1-q^*$ for $\lambda_1$
at each rate ratio $r$. Every condition then sits at $q(W)=0.50$,
the baseline is $d/q(W)=42$ identically, and the DCD falls from
$42.0$ to $24.3$ (factor $1.73$) as $I_W$ rises from $0$ to $0.50$
nats, so any surviving ordering is attributable to $I_W$ alone.
Figure~\ref{fig:exp1} shows the four curves ordered by $I_W$. Over
the resolvable tolerances $\varepsilon\in[0.010,0.024]$, the sample
requirement $n^*(\varepsilon)$ is a factor $1.87$ smaller at
$I_W=0.50$ than at $I_W=0$ (geometric mean at the $\sqrt2$ grid
resolution; $3200\to1600$ at $\varepsilon=0.010$). The DCD predicts
$1.73$; the baseline predicts $1.00$ and is excluded.

\textbf{Oracle pseudo-label ERM versus baselines (Experiment~2).}

Four alternatives share the linear backend and differ only in their
treatment of censored examples: discard-censored, treat-as-negative,
class-conditional IPCW~\citep{ausset2022empirical}, and a
single-pass batch adaptation of GDFM~\citep{yang2022generalized}.
With $\lambda_0\gg\lambda_1$ the in-window sample is class-skewed,
and the methods that ignore this are biased: discard-censored,
treat-as-negative, and GDFM plateau near $0.025$ at $n=6400$
(Figure~\ref{fig:exp2}); GDFM's gains target drift and post-click
signals absent here. Pseudo-label ERM and IPCW drive excess risk to
$0$, and pseudo-label ERM dominates IPCW at every finite $n$ because
it also uses the censored examples.

\textbf{Nonzero-delay-optimal regime (Experiment~4).}

A label read after waiting $W$ is flipped with probability
$\nu_0e^{-\beta W}$, and a fixed collection horizon $T$ shrinks the
usable sample as $1-W/T$, realizing
Assumption~\ref{ass:maturation}. Figure~\ref{fig:exp4} shows the
predicted U-shape (Theorem~\ref{thm:nonzero-delay-optimal}): excess
risk falls from $0.068$ at $W=0$ to essentially $0$ at the interior
optimum $W^*=6$, then rises as the sample starves. A monotone
per-example control is in the repository.

\textbf{Diversity across labelers (Experiment~5).}

The outcome depends entirely on the budget normalization
(Figure~\ref{fig:exp5}). Under the per-labeler normalization that
Theorem~\ref{thm:diversity} states, the inverse-variance mixture
wins and the gap grows with the rate entropy: $0.0001$ against
$0.0024$ for the best single labeler at $K=10$, a
$6.0$-standard-error gap, earned on $55K$ draws against $55$. At a
matched total budget of $N=160$ draws, split $N/K$ per labeler
against all $N$ on the fastest labeler, the effect reverses: the
mixture is worse at every $K>1$, by $2.4$, $3.2$, and $3.3$ standard
errors at $K=2,5,10$, because concentrating the budget yields $138$
usable labels while spreading it over ten labelers yields $84$.
These labelers differ in rate only, so mixing buys only the
sample-size effect that matching removes, and we report the
diversity benefit as \emph{not} demonstrated at a matched budget. A
genuine test needs labelers that differ beyond latency, in label
quality or input coverage.

\textbf{Active expedition and phase diagram (Experiments~3 and~6).}

We compare the threshold policy of Theorem~\ref{thm:policy}, ranking
by $\Delta(x)\cdot\Prob(D>W\mid x)$, against random,
uncertainty-only, and delay-only selection. The censoring
probability is a plug-in: the indicator $a$ is observed for every
passive example, so $\Prob(a=0\mid x)$ is an ordinary logistic
regression fit on the pilot sample already used for $\Delta(x)$, and
the policies hold no information the baselines lack. The threshold
policy is best at \emph{every} budget (Figure~\ref{fig:exp3}),
beating random by $21$--$48$ standard errors and the informed
ablations by $5.5$--$12.5$; in the small-budget regime $B/n\le5\%$,
where the $(1-1/e)$ guarantee is meaningful, it leads
uncertainty-only by $5.5$ and delay-only by $6.4$. An
oracle-censoring control reproduces the submitted version's finding
that delay-only catches up at $B/n\ge10\%$; that catch-up is an
oracle artifact, since delay-only stakes everything on the censoring
score and degrades when it is estimated. Experiment~6 runs the full
pipeline with the same plug-in over $(B,W)$ at three
informativeness levels (Figure~\ref{fig:exp6}). The high-risk
starved corner (small $W$, small $B$) shrinks as $I_W$ grows, from
$0.0328$ to $0.0323$ to $0.0224$ averaged over the two smallest
windows and budgets; the oracle control agrees. Coverage is the dominant axis, illustrating
the regime structure of Theorem~\ref{thm:phase}.

\textbf{Criteo Conversion Logs.}

On the canonical delayed-feedback
benchmark~\citep{chapelle2014modeling} ($\sim$16M clicks, $30$-day
maximum delay), the held-out estimate
$\hat I_W=\hat H(Y)-\hat H(Y\mid A)$ rises from $0.24$ to $0.52$
nats as $W$ grows from $12$\,h to $30$ days, a large fraction of
$H(Y)\approx0.53$ (Figure~\ref{fig:criteo}, left). Censoring is
one-sided, since an in-window conversion implies $Y=1$; then, with
\eqref{eq:eta} computed marginally, $I_W=I(Z;Y)=\eta(W)\,H(Y)$, and
the curve reads directly as $\hat\eta(W)$ rising toward $1$. A naive within-window classifier
trails an eventual-label oracle by $\approx2$ AUC points at small
$W$ and converges by $30$ days, with IPCW closing part of the gap
(right; logistic backend, five 80/20 splits). The static dump carries every label, so performance is monotone in
$W$ and no U-shape appears.

\textbf{Robustness and reproducibility.}

Excess risk degrades gracefully in $W$, converges under Massart
noise with a noise-level floor, and heavy-tailed (Pareto) delays
track the exponential case (figures in the repository). A disclaimed
check on binarized survival data
(FLCHAIN~\citep{dispenzieri2012use}, death within five years),
outside our theory, reproduces the Criteo pattern with $I_W$ up to
$0.37$ nats.
All experiments are CPU-only, run in minutes, and re-run
bit-for-bit from self-contained, deterministically seeded records;
the oracle-censoring and per-labeler-budget variants remain runnable
controls. Code and
records:
\url{https://github.com/krimler/delay_complexity_dimension/}.

\textbf{The empirical DCD.}

The theory uses the conditional $I(Y;A\mid X)$; the figures report
the marginal $I_W=I(Y;A)$, the only version identified on real data
without a model of $\Prob(Y\mid X)$. Across all fourteen synthetic configurations the conditional is
uniformly the smaller, differs by at most $1.2\%$, and shifts the
DCD by at most $0.08\%$ (full table in the repository); no reported
ordering depends on the choice, and Criteo and FLCHAIN keep the
marginal on the one-sided-censoring argument above. Overlaid ``DCD
predictions'' use the operational surrogate
$d/[q(W)+(1-q(W))\,I_W/H(Y)]$, the decomposition
\eqref{eq:eta-decomp} with $I_W$ in place of $I^0_W$, which matches
the limiting cases of Proposition~\ref{prop:limits}.

\section{Discussion and Open Questions}
\label{sec:discussion}

We introduced the Delay Complexity Dimension, a PAC-theoretic
dimension for binary classification with delayed and censored
labels. It lower-bounds the sample complexity of every learner, an
oracle pseudo-label ERM matches it on the lower-bound family, and
there the DCD is within a constant of the censored-ERM complexity
$d/q(W)$ (Remark~\ref{rem:balanced}); the regime where the two
separate is the paper's main open problem. The dimension recovers
classical VC and censored-ERM rates as limiting cases and reveals a
phenomenon with no analogue in the no-delay setting: under a fixed
collection horizon, a moderate delay can strictly improve learning.
A companion diversity prediction holds under a per-labeler
normalization and did not survive a budget-matched test
(Experiment~5).

We conclude with some open questions. First, does the
$O(\sqrt{\mathrm{DCD}/n})$ rate hold under heavily unbalanced label
priors (Remark~\ref{rem:tightness})? The Fisher-efficiency evidence
there suggests it does. Second, does the DCD characterize the
agnostic rate? We conjecture the characterization survives up to
logarithmic factors under Massart-style noise. Third, what is
the right theory for heavy-tailed delays? Our experiments suggest
robustness, and a guarantee is missing. Fourth, when must the delay
channel itself be estimated, and under what conditions is the
pseudo-label calibration consistent (Remark~\ref{rem:plugin})?
Fifth, an excess-risk bound for the threshold policy, beyond the
uniform-random selection of Theorem~\ref{thm:expedition}, is open
(Remark~\ref{rem:scope}). Finally, the contextual bandit version
remains open and would connect to \citet{masoudian2023delayed}.

\textbf{Acknowledgment.}
Portions of the text of this paper were edited with the
assistance of the AI system Claude (Anthropic). All technical
content, proofs, experiments, and conclusions are the author's own,
and all AI-assisted text was reviewed and verified by the author.

\balance
\bibliographystyle{IEEEtranSN}
\bibliography{refs}

\end{document}